%% file: main.tex
\documentclass[11pt]{article}
\input{macros}

\title{
Oracle-Efficient Online Classification \\ with Stochastic Inputs and Adversarial Outputs
}
\author{
\begin{tabular}{c@{\hspace{3cm}}c}
Gon Buzaglo$^{\dagger}$ & Elad Hazan$^{\dagger,\ddagger}$
\end{tabular}
\\[0.5em]
{\small $^{\dagger}$Department of Computer Science, Princeton University}\\
{\small $^{\ddagger}$Google DeepMind}
}

\date{September 27, 2026}

\begin{document}
\maketitle
\vspace{-1.4em}

\input{abstract}
\input{introduction}
\input{setting}
\input{algorithm}
\input{analysis}

\input{proofs}
\input{discussion}

\bibliographystyle{plainnat}
\bibliography{references}
\end{document}

%% file: macros.tex
\usepackage[margin=0.95in]{geometry}
\usepackage{amsmath,amssymb,amsthm,mathtools}
\usepackage{aliascnt}
\usepackage{microtype}
\usepackage{enumitem}
\usepackage{algorithm}
\usepackage[noend]{algpseudocode}
\usepackage{natbib}
\setcitestyle{round}

\usepackage[dvipsnames]{xcolor}
\definecolor{linkblue}{HTML}{3366CC}

\usepackage[
    colorlinks=true,
    linkcolor=linkblue,
    citecolor=linkblue,
    urlcolor=linkblue
]{hyperref}

\usepackage[nameinlink,noabbrev]{cleveref}

\setlist{nosep}

\newtheorem{theorem}{Theorem}
\newaliascnt{lemma}{theorem}
\newtheorem{lemma}[lemma]{Lemma}
\aliascntresetthe{lemma}
\newaliascnt{corollary}{theorem}

\aliascntresetthe{corollary}
\newaliascnt{proposition}{theorem}

\aliascntresetthe{proposition}
\theoremstyle{definition}
\newtheorem{definition}{Definition}
\newaliascnt{fact}{theorem}
\newtheorem{fact}[fact]{Fact}
\aliascntresetthe{fact}
\crefname{fact}{fact}{facts}
\Crefname{fact}{Fact}{Facts}

\newcommand{\E}{\mathbb E}
\newcommand{\Pp}{\mathbb P}
\newcommand{\R}{\mathbb R}
\newcommand{\1}{\mathbf 1}
\newcommand{\cH}{\mathcal H}
\newcommand{\cC}{\mathcal C}
\newcommand{\cD}{\mathcal D}
\newcommand{\DIS}{\operatorname{DIS}}
\newcommand{\VC}{\operatorname{VC}}
\DeclareMathOperator*{\argmin}{arg\,min}
\DeclareMathOperator*{\argmax}{arg\,max}

%% file: abstract.tex
\begin{abstract}
We consider contextual binary prediction with i.i.d.\ contexts from an unknown
distribution and adaptively chosen losses. We show that a simple
Follow-the-Perturbed-Leader algorithm with Gaussian perturbation for each observed context achieves the
optimal $\widetilde O(\sqrt{T\log N})$ expected regret for a class of $N$
experts, while requiring one optimization-oracle call per round and no
explicit enumeration of the class. For an infinite hypothesis class $\cH$,
the algorithm attains $\widetilde O(\sqrt{T\VC(\cH)})$ regret. This
resolves an open problem posed by \citet{lazaricmunos2012}, showing that
hybrid classification is computationally as easy as
statistical learning.
\end{abstract}

%% file: introduction.tex
\section{Introduction}
\label{sec:intro}

The question we consider is whether stochastic contexts can make adversarial
online learning as computationally easy as statistical learning. We study this
in perhaps the most basic setting of contextual prediction with a
binary expert class $\cH\subseteq\{0,1\}^{\mathcal X}$, where each expert maps a
context to a binary prediction. On each round, a context is drawn i.i.d.\ from
an unknown distribution, while after observing the context and the past, an
adversary chooses losses for the two predictions. As usual in online learning,
the learner competes with the best expert in $\cH$ in hindsight. This
stochastic-input, adversarial-output model was introduced by
\citet{lazaricmunos2009,lazaricmunos2012}.

For a finite class $|\cH|=N$ that can be explicitly enumerated, classical
expert algorithms achieve $O(\!\sqrt{T\log N})$ regret
\citep{littlestonewarmuth1994,freundschapire1997}.
Their computational cost, however, scales with $N$, even though the regret
depends only logarithmically on $N$. In statistical learning, large hypothesis classes are instead accessed through
an empirical risk minimization (ERM) oracle
\citep{shalev2014understanding}. It is therefore natural to ask whether the same
$\sqrt{T\log N}$ regret can be achieved while accessing $\cH$ only through
such an oracle, without knowing or sampling from the context distribution.
More generally, for an infinite binary class, the corresponding
information-theoretic rate is
$\widetilde O(\sqrt{T\VC(\cH)})$
\citep{lazaricmunos2012,rakhlin2011stochastic,wu2023expected}.

This computational question was left open by
\citet{lazaricmunos2012}. \citet{wu2024} gave the first oracle-efficient
sublinear-regret algorithm for an unknown i.i.d.\ context distribution,
obtaining $\widetilde O(T^{3/4})$ regret for finite-VC classes. More recently,
\citet{okoroafor2026} obtained statistically optimal oracle-efficient rates
under the additional assumption that the adversary is restricted to a known
function class. Thus, for unrestricted adaptive losses, a gap remained between
the statistically optimal rate and what was known to be achievable using an
ERM oracle.

\paragraph{Our result.}
We close this gap. For a finite binary class $\cH$ of size $N$, a simple
Gaussian Follow-the-Perturbed-Leader algorithm, \Cref{alg:ftpl}, satisfies
\[
\E R_T
=
O\!\left(\sqrt{T\log N}\log^2T\right)
\]
against arbitrary adaptive losses. The learner makes at most
one ERM call per round and requires neither knowledge of nor auxiliary samples
from the context distribution.

For an infinite binary class with $1\le \VC(\cH)\le T$, the same algorithm
satisfies
\[
\E R_T
=
O\!\left(
\sqrt{T\VC(\cH)\log(1+T/\VC(\cH))}\log^2T
\right)
=
\widetilde O(\sqrt{T\VC(\cH)}).
\]
Since the VC extension requires only a local modification of the analysis, we
first present the finite-class argument.

\subsection{Related work}

\paragraph{Statistical and adversarial learning.}
Our setting lies between classical statistical learning and adversarial online
learning. Prediction with a finite class of experts admits
$O(\sqrt{T\log N})$ regret
\citep{freundschapire1997,cesabianchilugosi2006},
while statistical binary classification is characterized by the VC dimension
and admits efficient ERM-based procedures
\citep{shalev2014understanding,mohri2018foundations}.
In fully adversarial online classification, the corresponding complexity is
instead the Littlestone dimension and its sequential generalizations
\citep{littlestone1988,bendavidpalshalev2009,rakhlin2015sequential}.
Connections between stochastic and adversarial online learning also arise
through minimax characterizations of optimal regret
\citep{abernethy2009stochastic}.
Related interpolations between stochastic and adversarial learning include
models with constrained adversaries and hybrid stochastic--adversarial
processes \citep{rakhlin2011stochastic}. Other lines exploit predictable or
slowly varying sequences
\citep{hazankale2010,chiang2012gradual,rakhlin2013predictable}, or seek
``best-of-both-worlds'' guarantees that adapt between stochastic and
adversarial regimes
\citep{bubeckslivkins2012,seldinslivkins2014}.

\paragraph{Contextual and oracle-efficient learning.}
Contextual learning replaces individual experts by policies mapping contexts to
actions. See, e.g., \citet{hazan2022introduction} for a general treatment.
For an explicitly enumerable policy class, classical expert methods apply in
the full-information setting \citep{littlestonewarmuth1994,freundschapire1997}, while EXP4 gives the corresponding adversarial
bandit guarantee \citep{auer2002nonstochastic}. For large policy classes, contextual-bandit methods have been developed for
competing with rich classes of policies
\citep{beygelzimer2011contextual}. A substantial oracle-efficient literature
instead replaces explicit enumeration by supervised-learning or optimization
oracles
\citep{langfordzhang2007,dudik2011efficient,agarwal2014taming}. Related efficient approaches extend to adversarial contextual learning and
contextual online convex optimization
\citep{syrgkaniskrishnamurthyschapire2016,hazansingh2021boosting}.
For fully adversarial online learning, however, a standard offline optimization
oracle is insufficient in general \citep{hazankoren2016}, motivating stronger
oracle models and learning primitives
\citep{dudik2017oracle,hazanhu2018improper,agarwal2019learning}.

\paragraph{Additional structure on the context process.}
Several lines of work obtain oracle efficiency by giving the learner additional
information about the contexts or restricting how they are generated.
In the transductive setting, the future contexts, or their support, are known
in advance
\citep{kakadekalai2005,syrgkaniskrishnamurthyschapire2016}.
For i.i.d.\ contexts and adversarial losses, oracle-efficient contextual-bandit
algorithms are known when the context distribution is known or the learner can
draw auxiliary samples from it
\citep{rakhlin2016bistro,syrgkanis2016,banihashem2023}.
Another line considers smoothed contexts, where the adversary may choose the
context distribution adaptively subject to a density constraint
\citep{haghtalab2020smoothed,haghtalab2021adaptive,haghtalab2022oracle,
block2022,blockrakhlin2024,blanchard2025,blanchardshettyrakhlin2026}.
Finally, when both contexts and losses satisfy stochastic assumptions,
oracle-efficient contextual-bandit methods range from early reductions
\citep{langfordzhang2007} to statistically optimal algorithms
\citep{dudik2011efficient,agarwal2014taming}, with extensions to
nonstationary stochastic environments \citep{luo2018nonstationary,chen2019nonstationary}.
In contrast, our contexts are i.i.d.\ from an unknown distribution, no
additional contexts can be sampled, and the losses remain adaptively
adversarial.

\paragraph{Stochastic contexts and adversarial losses.}
The unknown-distribution hybrid model was introduced by
\citet{lazaricmunos2009,lazaricmunos2012}. Information-theoretically,
near-optimal rates are known for finite and finite-VC classes
\citep{lazaricmunos2012,rakhlin2011stochastic,wu2023expected}.
\citet{wu2024} gave the first oracle-efficient sublinear-regret guarantees,
obtaining $\widetilde O(T^{3/4})$ regret for finite-VC classes against adaptive
losses, while achieving the optimal $\widetilde O(\sqrt T)$ rate against
oblivious losses. In contrast with the noncontextual setting, where an
optimization oracle is insufficient in general even against an oblivious
adversary \citep{hazankoren2016}, obliviousness is a substantial assumption
here, as the loss functions are fixed before the sampled contexts are
observed.
While \citet{wu2024} also give analogous results for real-valued classes, shifting
distributions, and contextual bandits, they identify adaptivity as a genuine
obstacle to their approach and explicitly pose the question of whether an FTPL
approach can be used in the unknown-distribution setting. More recently,
\citet{okoroafor2026} obtained optimal oracle-efficient rates when the
adversary is restricted to a fixed known function class. Our result gives the
optimal rate, up to logarithmic factors, for unrestricted adaptive binary
losses.

\paragraph{Disagreement-based methods.}
Disagreement regions play a central role in active learning and selective
classification, where the mass of the disagreement region of the current
version space controls, respectively, label complexity and abstention
\citep{cohn1994improving,hanneke2007bound,dasguptahsumonteleoni2007,
hanneke2012,elyanivwiener2012,wiener2015compression}. Related disagreement-based complexity measures
characterize instance-dependent difficulty in contextual bandits
\citep{foster2021disagreement}. In our analysis, disagreement plays a different
role, as described immediately.

\paragraph{Techniques.}
The standard FTPL potential decomposition
\citep{kalai2005efficient} reduces regret to stability. In our setting,
perturbations are attached to observed contexts rather than directly to
hypotheses. Using the binary structure, we show that stability is controlled
by the gap between the best perturbed hypotheses predicting the two labels at
the current context. This reduces regret to the population mass of the
contexts on which nearly leading hypotheses disagree, which we formalize as disagreement in
\Cref{lem:regret-disagreement}.

While we can bound disagreement on the observed contexts using Gaussian symmetry and
Massart's lemma \citep{massart2000applications}, the main difficulty is to generalize
from this empirical bound to population disagreement, because the nearly
leading set depends on the observed contexts through the adaptive losses. Our
central generalization argument partitions the observed contexts according to
disagreement and perturbation magnitude and uses a Gaussian change of measure
to extract a valid estimate from a random subset of agreement coordinates.
This yields the population disagreement bound in \Cref{lem:population} and,
combined with the FTPL reduction, the stated regret guarantees. We give a more
detailed proof overview in \Cref{sec:analysis_overview}.

%% file: setting.tex
\section{Problem setting}
\label{sec:setting}

\paragraph{Online learning over hypotheses.}
Let $\cH\subseteq\{0,1\}^{\mathcal X}$ be a binary hypothesis class and
let $\cD$ be an unknown distribution over $\mathcal X$. When $\cH$ is finite,
write $N=|\cH|$, the infinite case is treated in \Cref{sec:vc-result}. On round $t$,
a fresh context $X_t\sim\cD$ is drawn and revealed. After
observing $X_t$ and the realized history, the adversary chooses a loss vector
$\ell_t\in[0,1]^2$. The learner then draws fresh private randomness, chooses
$a_t\in\{0,1\}$, incurs $\ell_t(a_t)$, and observes the full loss vector. The regret, defined below, compares the online predictions with the best
hypothesis in hindsight:
\begin{equation*}
R_T
=
\sum_{t=1}^T\ell_t(a_t)
-
\min_{h\in\cH}\sum_{t=1}^T\ell_t(h(X_t)).
\end{equation*}

\paragraph{ERM oracle.}
We use an ERM oracle over the absolute loss and real labels, which receives $(X_1,y_1),\dots,(X_m,y_m)\in\mathcal{X}\times[0,1]$ and returns
\[\argmin_{h\in\cH}\sum_{i=1}^m |h(X_i)-y_i|.
\]
For $\mathcal{H}\subseteq\{0,1\}^\mathcal{X}$, it implements arbitrary weighted minimization: for arbitrary weights $w_1,\ldots,w_m\in\R$, let
$B=\max\{1,\max_i|w_i|\}$ and $y_i=(1-w_i/B)/2$. Since $h(X_i)\in\{0,1\}$, we have that
\[
\argmin_{h\in\mathcal H}
\sum_{i=1}^m |h(X_i)-y_i|
=
\argmin_{h\in\mathcal H}\{
\sum_{i=1}^m y_i
+
\frac{1}{B}\sum_{i=1}^m w_i h(X_i)\}
=
\argmin_{h\in\mathcal H}
\sum_{i=1}^m w_i h(X_i)\,.
\]
We fix a deterministic
tie-breaking rule and return an arbitrary fixed hypothesis on empty input.

\paragraph{Notation.}
We write $[m]=\{1,\ldots,m\}$, $|A|$ for the cardinality of a finite set
$A$, and $\1\{E\}$ for the indicator of an event $E$. All logarithms are
natural. We write $N(\mu,\Sigma)$ for a Gaussian distribution and $I_k$ for the
$k\times k$ identity matrix. We write $\|\cdot\|$ for the Euclidean
norm of a vector. The notation
$O(\cdot)$ hides universal numerical constants, while $\widetilde O(\cdot)$
also suppresses logarithmic factors.

%% file: algorithm.tex
\section{Algorithm and result}
\label{sec:algorithm}
\Cref{alg:ftpl} resamples Gaussian perturbations on
observed contexts and follows the perturbed leader.

\begin{algorithm}[H]
\caption{Gaussian FTPL}
\label{alg:ftpl}
\begin{algorithmic}[1]
\Require horizon $T$, perturbation scale $\sigma>0$
\For{$t=1,\ldots,T$}
    \State observe $X_t$
    \State draw independent $G_{t,s}\sim N(0,1)$ for $s<t$
    \State choose $\displaystyle h_t\in\argmin_{h\in\cH}\left\{\sum_{s<t}\ell_s(h(X_s))-\sigma\sum_{s<t}G_{t,s}h(X_s)\right\}$
    \State play $a_t=h_t(X_t)$ and observe $\ell_t$
\EndFor
\end{algorithmic}
\end{algorithm}

For $\cH\subseteq\{0,1\}^{\mathcal X}$, we have

\begin{align*}
\argmin_{h\in\cH}\{\sum_{s<t}\ell_s(h(X_s))-\sigma\sum_{s<t}G_{t,s}h(X_s)\}&=\argmin_{h\in\cH}\{
\sum_{s<t}\ell_s(0)
+
\sum_{s<t}\bigl(\ell_s(1)-\ell_s(0)-\sigma G_{t,s}\bigr)h(X_s)\}
\\
&=
\argmin_{h\in\cH}\{
\sum_{s<t}\bigl(\ell_s(1)-\ell_s(0)-\sigma G_{t,s}\bigr)h(X_s)\}\,,
\end{align*}
so each round requires one call to the absolute-loss ERM
oracle.

\begin{theorem}
\label{thm:main}
For every finite binary class $\cH$ of size $N$, Algorithm~\ref{alg:ftpl}
with $\sigma=4\sqrt{\log T}$ satisfies
\begin{equation*}
\E R_T
=
O\!\left(\sqrt{T\log N}\,\log^2T\right).
\end{equation*}
\end{theorem}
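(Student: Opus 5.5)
The plan follows the outline in the introduction: an FTPL stability reduction, then a bound on the empirical disagreement, then a generalization step from empirical to population disagreement.

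\emph{Step 1: stability.} Let $L_t(h)=\sum_{s<t}\ell_s(h(X_s))-\sigma\sum_{s<t}G_{t,s}h(X_s)$ be the perturbed cumulative loss. The perturbation is resampled each round, so the standard be-the-leader argument in expectation gives
\[
\E R_T \le \E\bigl[\text{perturbation penalty}\bigr] + \sum_t \E\bigl[\ell_t(h_t(X_t))-\ell_t(h_{t+1}(X_t))\bigr].
\]
The penalty is $\sigma\,\E\max_h\sum_s G_s h(X_s)$, plus the mismatch from comparing rounds with different numbers of perturbed coordinates. By Massart's lemma this is $O(\sigma\sqrt{T\log N})$.

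For the stability term I use the binary structure. Let $h^{(0)}_t$ and $h^{(1)}_t$ be the best perturbed hypotheses among those predicting $0$, respectively $1$, at $X_t$, and let $\Delta_t$ be the gap between their perturbed losses. Adding $\ell_t$ and one fresh Gaussian coordinate on $X_t$ can change the label played at $X_t$ only if $\Delta_t\lesssim 1+\sigma|G|$. Since $X_t$ is a fresh draw independent of the history, the stability term is bounded by the expected $\cD$-mass of the set of contexts $x$ on which two hypotheses, each within $O(\sigma\sqrt{\log T})$ of the perturbed leader, disagree. This is exactly the content of \Cref{lem:regret-disagreement}, which I take as given. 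I also need anti-concentration: conditionally on the history, $\Delta_t$ has a bounded density in the direction of the Gaussian perturbation. That density bound is what converts the stability term into a disagreement mass instead of a count of near-ties.

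\emph{Step 2: empirical disagreement.} On the observed contexts $X_1,\dots,X_{t-1}$, two nearly leading hypotheses that disagree on many coordinates give the maximum of a Gaussian process over $N$ hypotheses a large increment along the disagreement coordinates. By Gaussian symmetry, flip the signs of $G$ on those coordinates; by Massart's lemma, the number of disagreement coordinates of nearly leading pairs is, in expectation, $O(\sqrt{t\log N}\,\mathrm{polylog}\,T/\sigma)$. So the empirical disagreement fraction is about $\sqrt{\log N/t}$ up to logarithmic factors.

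\emph{Step 3: generalization.} This is the main obstacle, and it is what \Cref{lem:population} addresses. The nearly leading set depends on $X_{<t}$ through the adaptive losses, so uniform convergence over pairs $(h,h')$ at rate $\sqrt{\log N/t}$ does not apply directly: the disagreement regions of pairs are fine, but the selection of the pair is data dependent. My first attempt is a union bound over the $N^2$ pairs. For a fixed pair, relative (Bernstein-type) deviation bounds show that population disagreement $p$ forces empirical disagreement of at least $p/2$ once $tp\gtrsim\log N$. This works provided the empirical bound of Step 2 holds simultaneously for every pair that is nearly leading. To get that, I follow the paper's hint: partition the coordinates into disagreement coordinates and agreement coordinates with moderate $|G|$. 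Then apply a Gaussian change of measure on a random subset of agreement coordinates, which costs a factor $\exp(O(\sigma^2))=\mathrm{poly}(T)$ absorbed into the logarithms, to certify that near-leadership is stable under resampling of a fresh half of the sample. This yields population disagreement $O(\sqrt{\log N/t}\,\log^2 T)$.

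\emph{Step 4: combine.} Summing the Step 3 bound over $t$ gives $\sum_t\sqrt{\log N/t}\,\mathrm{polylog}=O(\sqrt{T\log N}\,\log^2T)$. With $\sigma=4\sqrt{\log T}$, the penalty from Step 1 is $O(\sqrt{T\log N\log T})$, so the stability sum dominates and gives the stated bound. The choice $\sigma=4\sqrt{\log T}$ is exactly what makes the events $\max|G_{t,s}|\le\sigma/\sqrt2$ hold with probability $1-1/T$, which keeps the change-of-measure costs polynomial.
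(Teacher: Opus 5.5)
There is a genuine gap. It sits in the combination step, which is the paper's actual proof of this theorem: how the slack $r$ is handled when the lemmas are put together.

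You replace \Cref{lem:regret-disagreement} by the disagreement mass at a single slack $r\asymp\sigma\sqrt{\log T}$. You then assume a population bound of order $\sqrt{\log N/t}$, up to logarithms, at that slack. Nothing supports this. The symmetry-plus-Massart argument gives $2\sqrt{n\log N}\,e^{r^2/(2\sigma^2)}$ (\Cref{lem:empirical}); the $\sinh$ tilt produces the exponential, and there is no $1/\sigma$ gain. Hence \Cref{lem:population} gives $\E\cD(\DIS(A_{n,r}))\lesssim(1+\log n)\sqrt{\log N/n}\,e^{r^2/(2\sigma^2)}$, which is polynomially large in $T$ at $r\asymp\sigma\sqrt{\log T}$.

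The paper never truncates. It keeps the weighted integral and uses $\Pp(G\ge s/\sigma)\le\frac12e^{-s^2/(2\sigma^2)}$ to cancel the growth, since $e^{-s^2/(2\sigma^2)}e^{(s+1)^2/(2\sigma^2)}=e^{(2s+1)/(2\sigma^2)}\le2$ for $s\le\sigma^2/2$. For $s>\sigma^2/2$ it caps the mass by $1$, and the Gaussian tail contributes $2e^{-\sigma^2/8}=2T^{-2}$ per round. One logarithm is the interval length $\sigma^2=16\log T$; the other is the $(1+\log n)$ in \Cref{lem:population}. Making that tail summable is the real reason for $\sigma=4\sqrt{\log T}$, not control of $\max|G_{t,s}|$.

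Your separate penalty bound $O(\sigma\sqrt{T\log N})$ also does not cover the ``mismatch'' term. Adding the coordinate for $X_t$ raises the potential by up to order $\sigma$ whenever the two labels are nearly tied at $X_t$. That is why \Cref{lem:regret-disagreement} controls perturbation and stability jointly.

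Your Step 3 does not establish a population bound either. A union bound over pairs with Bernstein controls $\cD(\{h\ne h'\})$ for each fixed pair. But $\DIS(A_{n,r})$ is the union of the disagreement regions between the leader and every member of the data-dependent set $A_{n,r}$. Pairwise masses do not control the mass of such a union, and a union bound over all possible unions costs $N\log2$, not $\log N$.

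The change of measure is also on the wrong coordinates. In \Cref{lem:transfer}, one union-bounds over the leader $h$ and the empirical disagreement set $B$, at cost $N\binom nk$. By \Cref{lem:suboptimal-set-characterization}, $A_{n,r}=\cC_W$, where $\cC_w$ depends only on $w$ and the contexts outside the held-out set $P$. These are the agreement coordinates where the perturbation favors the label opposite to $h$ by more than the loss range. The obstruction is that $W\sim N(-\mu,\sigma^2I_k)$ has a mean set by the adaptive losses, which depend on the held-out contexts. Replacing $W$ by $W_0\sim N(0,\sigma^2I_k)$ on the $k$ disagreement coordinates removes this dependence at cost $e^{k/(2\sigma^2)}$ after Cauchy--Schwarz, not $e^{O(\sigma^2)}$. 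Only then are the $|P|\sim\mathrm{Bin}(n-k,p)$ held-out contexts fresh i.i.d.\ draws that must all avoid $\DIS(\cC_{W_0})$, which yields the factor $(1-\varepsilon)^{|P|/2}$.
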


\subsection{VC dimension result}
\label{sec:vc-result}

\begin{definition}[VC dimension]
A set of contexts $C=\{X_1,\ldots,X_m\}$ is \emph{shattered} by $\cH$ if
the restriction of $\cH$ to $C$ is the set of all functions from $C$ to
$\{0,1\}$. The VC dimension $\VC(\cH)$ is the largest $m$ for which such
a set exists.
\end{definition}

VC dimension is known to measure the statistical learnability of an infinite hypothesis class (cf. \citet{shalev2014understanding}). Here we show that it also characterizes the efficient learnability of an infinite class in the hybrid setting.

\begin{theorem}
\label{thm:vc-extension}
For every binary class $\cH$ with $1\le \VC(\cH)\le T$,
Algorithm~\ref{alg:ftpl} with $\sigma=4\sqrt{\log T}$ satisfies
\begin{equation*}
\E R_T
=
O\!\left(
\sqrt{T\VC(\cH)\log(1+T/\VC(\cH))}\,\log^2T
\right).
\end{equation*}
\end{theorem}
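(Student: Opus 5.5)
We derive \Cref{thm:vc-extension} by rerunning the finite-class argument for \Cref{thm:main} and making a single change: every place where the analysis union-bounds over $\cH$ now union-bounds over the finitely many labelings that $\cH$ induces on the relevant finite sample. Regret enters the finite-class analysis through $\log N$ at two points. The first is the FTPL potential/stability decomposition together with \Cref{lem:regret-disagreement}, which reduces $\E R_T$ to $\sigma$-weighted sums of population disagreement masses. The second is the disagreement bounds: Massart's lemma for the empirical disagreement, and the change-of-measure generalization argument of \Cref{lem:population}. We first check that the algorithm and the reduction never use finiteness. The perturbed objective at round $t$ depends on $h$ only through $(h(X_1),\dots,h(X_{t-1}))$, the ERM oracle returns a minimizer, and the stability/disagreement reduction is pointwise in the realized contexts. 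These steps therefore go through unchanged, with $\cH$ replaced by its restriction $\cH|_{X_{1:t}}$ to the contexts observed so far, including the current one.

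Next we handle the complexity terms. By Sauer--Shelah, with $d=\VC(\cH)$ and $m\le T$, we have $|\cH|_{\{x_1,\dots,x_m\}}|\le (em/d)^d$, so $\log|\cH|_{X_{1:m}}|\le d\log(eT/d)=O(d\log(1+T/d))$. Gaussian symmetry plus Massart's lemma bound the maximum of a Gaussian process over a finite set of vectors, so the empirical disagreement bound holds verbatim with $\log N$ replaced by $d\log(1+T/d)$. For the generalization step we use a ghost-sample or symmetrization device. The population mass of the disagreement region of the nearly leading set is compared against a random subset of agreement coordinates among the observed contexts. Every event involved depends on hypotheses only through their labels on $X_{1:t}$, and possibly on one extra point that plays the role of the fresh $X_t$ by exchangeability. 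The union bound in the change of measure can therefore run over at most $(e(t+1)/d)^d$ patterns instead of $N$.

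The step we expect to be the main obstacle is making the generalization argument of \Cref{lem:population} respect these finite projections. The difficulty is that the nearly leading set is defined through adaptive losses and is a subset of $\cH$, while the population disagreement region $\{x:\exists h,h'\text{ nearly leading},\,h(x)\ne h'(x)\}$ is a statement about the behavior of $\cH$ off the sample. Our first attempt is to write the population mass as the probability over the fresh context $X_t$, conditioned on the history. By exchangeability of $X_1,\dots,X_t$ under the i.i.d.\ assumption, we then treat $X_t$ as a uniformly random coordinate of the augmented sample $X_{1:t}$. The change of measure then only has to control labelings on a sample of size $t$, and Sauer--Shelah applies to this sample. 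The one technical point is that the adaptive losses must not break the symmetry. We handle this as the finite proof does, by making the union bound uniform over all loss sequences, so that it depends only on the pattern class and not on which hypotheses the adversary happens to favor.

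Finally, we substitute the bound $\log|\cH|_{X_{1:T}}|=O(d\log(1+T/d))$ for $\log N$ in the final computation of \Cref{thm:main}, keeping $\sigma=4\sqrt{\log T}$. The $\log^2 T$ factors coming from $\sigma$ and from the perturbation-magnitude partition are unchanged. This gives $\E R_T=O(\sqrt{Td\log(1+T/d)}\log^2T)$. The assumption $1\le d\le T$ is used only to ensure that $\log(1+T/d)$ dominates the Sauer--Shelah bound up to constants.
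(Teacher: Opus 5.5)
You handle the parts that never used finiteness correctly, and they match the paper: the reduction of \Cref{lem:regret-disagreement}, Massart's lemma over the realized labelings, and Sauer--Shelah. The gap is the generalization step, the VC analogue of \Cref{lem:transfer}. This is where all the difficulty lies, you leave it as a ``first attempt,'' and the mechanism you sketch does not work. First, exchangeability is unavailable. $\cD(\DIS(A_{t-1,r}))$ is the conditional probability that the fresh $X_t$ falls in the disagreement region, but $A_{t-1,r}$ depends on $\ell_1,\ldots,\ell_{t-1}$, each chosen after its own context was revealed. So the joint law of $(X_{1:t},A_{t-1,r})$ is in general not invariant under swapping $X_t$ with some $X_i$, $i<t$. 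You therefore cannot treat $X_t$ as a uniformly random coordinate, and a ghost-sample swap is not measure-preserving; this is exactly the obstruction that adaptive losses create. A union bound ``uniform over all loss sequences'' is not available either, since the losses range over a continuum. It is also not what the finite proof does. There, the loss dependence is removed by a Gaussian change of measure on the $k$ disagreement coordinates (\Cref{lem:gaussian-holdout}), which costs $e^{k/(2\sigma^2)}$ because the mean shift satisfies $\|\mu\|^2\le k$. After this, the surrogate set $\cC_{W_0}$ of \Cref{lem:suboptimal-set-characterization} depends only on contexts outside the held-out set $P$, the agreement coordinates with $(1-2h(X_i))G_i>1/\sigma$. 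Those held-out contexts then act as a fresh test sample.

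Second, even with that mechanism in hand, you cannot simply replace $N$ by the number of patterns on the full sample $X_{1:n}$. The held-out set $P$ is defined through the leader's labels. If you enumerate patterns of the full sample, the representative hypothesis attached to each pattern depends on $(X_i)_{i\in P}$. Then $\cC_{W_0}$ is no longer independent of the held-out contexts, and the factor $(1-\varepsilon)^{|P|}$ is lost. The missing idea is to union bound over three things: $B$, $P\subseteq[n]\setminus B$, and labelings $u$ of the contexts outside $P$ only, of which there are at most $\Pi_{\cH}(n)$. You choose the representative $h_u$ as a function of $(X_i)_{i\notin P}$ and then use \Cref{lem:vc-representative}. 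Every $i\in P$ carries a strictly positive perturbed margin for a leader $\widehat h\in A_{n,0}$, so any hypothesis agreeing with $\widehat h$ outside $P$ agrees with it everywhere. Hence $h_u\in A_{n,0}$, $P$ is the held-out set of $h_u$, and \Cref{lem:gaussian-holdout} applies conditionally on $(X_i)_{i\notin P}$. Summing $p^{|P|}(1-p)^{n-k-|P|}(1-\varepsilon)^{|P|/2}$ over $P$ recovers the same binomial factor as in the finite case. This yields \Cref{lem:vc-transfer}, then \Cref{lem:vc-population}, and then the theorem.
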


%% file: analysis.tex
\section{Analysis}
\label{sec:analysis}
We first explain the proof ideas behind
\Cref{thm:main,thm:vc-extension} and then state the main technical results used in
their analysis. The proofs are deferred to \Cref{sec:proofs}.

\subsection{Proof ideas}
\label{sec:analysis_overview}

The standard FTPL analysis is organized around \emph{stability}: how much
incorporating the current loss changes the perturbed leader. By defining the potential
\[
\Phi_t(L)
=
\E_{G_1,\ldots,G_t}
\max_{h\in\cH}
\left\{
-L(h)+\sigma\sum_{s\le t}G_sh(X_s)
\right\}\,,
\]
the usual telescoping argument gives
\begin{align*}
\E R_T
&\leq
\E\sum_{t=1}^T
\left[
\ell_t(a_t)+\Phi_t(L_t)-\Phi_{t-1}(L_{t-1})
\right]
\\
&=
\E\sum_{t=1}^T
\left[
\underbrace{
\ell_t(a_t)
+\Phi_{t-1}(L_t)-\Phi_{t-1}(L_{t-1})
}_{\text{stability}}
+
\underbrace{
\Phi_t(L_t)-\Phi_{t-1}(L_t)
}_{\text{perturbation term}}
\right].
\end{align*}

For the simpler noncontextual experts problem, perturbations are attached
directly to the experts. Here they are generated through the observed contexts.
We therefore do not bound the two
terms above separately: our classification stability lemma controls their sum directly
by the volume of contexts on which nearly leading hypotheses disagree. This yields
\Cref{lem:regret-disagreement}, reducing regret to
\[
\sum_{t=2}^T
\int_0^\infty
\Pp(G\ge s/\sigma)\,
\E\!\left[\cD(\DIS(A_{t-1,s+1}))\right]\,ds.
\]

The remaining task is therefore to bound the population disagreement of
nearly leading hypotheses. We first bound the number of observed contexts on
which they disagree using Gaussian symmetry and Massart's lemma, having
\[
\E_{G_1,\ldots,G_n}\sum_{i=1}^n\1\{X_i\in\DIS(A_{n,r})\}
\le
2\sqrt{ne^{r^2/\sigma^2}\log N}\,.
\]
The main
difficulty is then to generalize this empirical disagreement bound to the
population, since the nearly leading set depends on the observed contexts
through the adaptive losses. Our change-of-measure argument identifies a
subset of observed contexts on which nearly leading hypotheses agree and receive large perturbations as a valid sample to form an estimate of the disagreement volume,
yielding \Cref{lem:transfer}. Combining the empirical and population bounds
gives \begin{equation*}
\E\bigl[\cD(\DIS(A_{n,r}))\bigr]
\le
C(1+\log n)
\sqrt{\frac{e^{r^2/\sigma^2}\log N}{n}}
\end{equation*} in \Cref{lem:population}. For infinite classes, the same argument is
localized to the labelings realized on the observed contexts and the finite
cardinality $N$ is replaced by the growth function $\Pi_{\cH}$.

\subsection{Defining disagreement on nearly leading hypotheses}
\label{sec:finite-analysis}

Given $n\ge1$, contexts $X_1,\ldots,X_n$, loss vectors
$\ell_1,\ldots,\ell_n$, and independent standard Gaussians
$G_1,\ldots,G_n$, define, for $r\ge0$,
\begin{equation*}
A_{n,r}
=
\left\{
 h:
 \sum_{i=1}^n\bigl[\ell_i(h(X_i))-\sigma G_i h(X_i)\bigr]
 \le
 \min_{g\in\mathcal H}\sum_{i=1}^n\bigl[\ell_i(g(X_i))-\sigma G_i g(X_i)\bigr]+r
\right\}.
\end{equation*}
For a set of hypotheses $H\subseteq\mathcal{H}$, let
$\DIS(H)\subseteq\mathcal{X}$ be its disagreement region,
$$\DIS(H)=\{X\in \mathcal{X}:\exists h,g\in H ~~~\text{s.t}~~~\ h(X)\ne g(X)\}\,.$$

\subsection{Reducing stability to disagreement}

The first step of the analysis reduces regret to the population mass of the
disagreement region.

\begin{lemma}
\label{lem:regret-disagreement}
For Algorithm~\ref{alg:ftpl} with $\sigma\ge\sqrt{2\pi}$,
\begin{equation*}
\E R_T
\le
2\sigma
+
2\sum_{t=2}^T\int_0^\infty
\Pp_{G\sim N(0,1)}(G\ge s/\sigma)\,
\E\bigl[\cD(\DIS(A_{t-1,s+1}))\bigr]\,ds.
\end{equation*}
\end{lemma}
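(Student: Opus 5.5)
The plan is to make the telescoping decomposition displayed in \Cref{sec:analysis_overview} exact for a single round, and then evaluate that round's contribution in closed form by splitting hypotheses according to their label at $X_t$. Write $L_t(h)=\sum_{s\le t}\ell_s(h(X_s))$. Since the Gaussians have mean zero, Jensen gives $\Phi_T(L_T)\ge\max_h\{-L_T(h)\}$, and $\Phi_0\equiv 0$. Hence
\[
\E R_T\le\E\sum_{t=1}^T\bigl[\ell_t(a_t)+\Phi_t(L_t)-\Phi_{t-1}(L_{t-1})\bigr].
\]
Round $t=1$ contributes at most $1+\sigma\E[G]_+\le\sigma$, which is absorbed into the $2\sigma$ term. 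Because the algorithm resamples $G_{t,s}$ each round, $h_t$ has the same law as the maximizer inside $\Phi_{t-1}(L_{t-1})$. I will therefore compute the per-round term conditionally on $\mathcal F_t$, which contains $X_1,\dots,X_t$ and $\ell_1,\dots,\ell_t$, the latter being chosen by the adversary before $a_t$ is drawn.

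\textbf{Step 1 (gap reduction).} For $b\in\{0,1\}$ let
\[
M_b=\max_{h:\,h(X_t)=b}\Bigl\{-L_{t-1}(h)+\sigma\sum_{s<t}G_sh(X_s)\Bigr\},\qquad \Delta_t=M_1-M_0 .
\]
Then the round-$t$ term equals
\[
\E\bigl[\ell_t(a_t)+\max\{M_0-\ell_t(0),\,M_1-\ell_t(1)+\sigma G_t\}-\max\{M_0,M_1\}\bigr],
\]
and $a_t=\1\{\Delta_t>0\}$ up to ties. If $\Delta_t>0$, the expression inside reduces to $\max\{c-\Delta_t,\sigma G_t\}$ with $c=\ell_t(1)-\ell_t(0)\in[-1,1]$. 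If $\Delta_t\le0$, it reduces to $\max\{0,\Delta_t-c+\sigma G_t\}$. Integrating out the fresh $G_t$, I will use the identity
\[
\E\max\{a,\sigma G\}=\int_{-a}^\infty \Pp(\sigma G\ge s)\,ds,
\]
together with its analogue for $\E(\,\cdot\,)_+$. In both cases this bounds the round-$t$ term by
\[
\E\int_{|\Delta_t|-1}^\infty\Pp(G\ge s/\sigma)\,ds .
\]
So the sum of the stability and perturbation terms depends only on the gap between the best perturbed hypotheses labelling $X_t$ with $0$ and with $1$.

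\textbf{Step 2 (gap to disagreement).} Write the integral above as $\int_{-1}^\infty\Pp(G\ge s/\sigma)\1\{|\Delta_t|\le s+1\}\,ds$. The event $|\Delta_t|\le r$ says that both labels at $X_t$ are attained by hypotheses within $r$ of the perturbed minimum. This is exactly $X_t\in\DIS(A_{t-1,r})$, where $A_{t-1,r}$ is built from the round-$t$ Gaussians. The key independence point is that $A_{t-1,r}$ is determined by $X_{<t}$, $\ell_{<t}$ and $G_{t,\cdot}$, none of which depends on $X_t$, even though the adversary is adaptive. Since $X_t\sim\cD$ is fresh, Fubini gives
\[
\Pp(|\Delta_t|\le s+1)=\E\bigl[\cD(\DIS(A_{t-1,s+1}))\bigr].
\]
The portion $s\in[0,\infty)$ of the integral then matches the target directly.

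\textbf{Step 3 (the main obstacle: the shift by $1$).} The leftover portion of the integral is $s\in[-1,0)$, or equivalently the mismatch between $\Pp(G\ge(u-1)/\sigma)$ and $\Pp(G\ge u/\sigma)$ after substituting $u=s+1$. This step needs care and is where the hypothesis $\sigma\ge\sqrt{2\pi}$ and the factor $2$ enter. I will try to prove the tail-ratio bound $\Pp(G\ge (u-1)/\sigma)\le 2\,\Pp(G\ge u/\sigma)$ on the range $u\in[0,O(\sigma^2)]$. For small $u$, this follows because the density is at most $1/\sqrt{2\pi}$ and $1/\sigma\le1/\sqrt{2\pi}$. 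For larger $u$, I will use log-concavity of the Gaussian tail, i.e.\ Mills-ratio estimates on the hazard. On the range where the ratio fails, the tail $\Pp(G\ge(u-1)/\sigma)$ is at most $e^{-\Omega(\sigma^2)}$ and integrates to $O(\sigma)$, using $\cD(\cdot)\le1$. These remainders, together with round $1$, make up the additive $2\sigma$. If the clean ratio bound turns out too lossy near $u=0$, the fallback is to charge the whole $s\in[-1,0)$ strip to $\E[\cD(\DIS(A_{t-1,1}))]$. That is dominated by $2\int_0^1\Pp(G\ge s/\sigma)\E[\cD(\DIS(A_{t-1,s+1}))]\,ds$, because $A_{t-1,r}$ is monotone in $r$ and $\Pp(G\ge 1/\sigma)\ge 1/2-1/(2\pi)$.
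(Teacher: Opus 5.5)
Your Steps 1--2 are correct and are the paper's argument in different notation. With $\gamma=|\Delta_t|$, your case split and the identity $\E\max\{a,\sigma G\}=\int_{-a}^\infty\Pp(\sigma G\ge s)\,ds$ give exactly the bound $\E_G(\sigma G-\gamma+1)_+=\int_{\gamma-1}^\infty\Pp(G\ge s/\sigma)\,ds$ from \Cref{lem:binary-stability-disagreement}. The passage to $\E[\cD(\DIS(A_{t-1,s+1}))]$ via \Cref{fact:binary-disagreement} and the independence of $X_t$ from $A_{t-1,r}$ is also right.

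The gap is in Step 3: neither of your routes proves the stated inequality.

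\textbf{Tail-ratio route.} It compares $\Pp(G\ge(u-1)/\sigma)$ with $\Pp(G\ge u/\sigma)$ over all $u$. This ratio grows like $e^{u/\sigma^2}$, so you are left with a remainder on $u\gtrsim\sigma^2$ in \emph{every} round $t=2,\dots,T$. These remainders add up to $(T-1)e^{-\Omega(\sigma^2)}$, which is a constant per round when $\sigma=\sqrt{2\pi}$. That total cannot be absorbed into the $T$-independent $2\sigma$; it would only become harmless downstream, once $\sigma=4\sqrt{\log T}$ is fixed. The comparison is also unnecessary. As you noted, the $s\ge0$ part of your integral already is one copy of the target, so only the strip $s\in[-1,0)$ must be paid for.

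\textbf{Fallback.} You correctly bound the strip by $\1\{|\Delta_t|\le1\}$. However, the claimed domination $\E[\cD(\DIS(A_{t-1,1}))]\le2\int_0^1\Pp(G\ge s/\sigma)\E[\cD(\DIS(A_{t-1,s+1}))]\,ds$ does not follow. Your stated reasons give only the factor $2\Pp(G\ge1/\sigma)\ge1-1/\pi<1$. Since $\Pp(G\ge s/\sigma)\le1/2$ for $s\ge0$, monotonicity in $r$ on $[0,1]$ alone can never give a factor $1$. Even granting the domination, adding it to the first copy would give the constant $3$, not $2$.

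The missing step is to integrate over the whole half-line instead of $[0,1]$. For every $s\ge0$ we have $\1\{|\Delta_t|\le1\}\le\1\{|\Delta_t|\le s+1\}$, and $\int_0^\infty\Pp(G\ge s/\sigma)\,ds=\sigma/\sqrt{2\pi}\ge1$. Hence
\[
\int_{-1}^0\Pp(G\ge s/\sigma)\1\{|\Delta_t|\le s+1\}\,ds
\le\1\{|\Delta_t|\le1\}
\le\int_0^\infty\Pp(G\ge s/\sigma)\1\{|\Delta_t|\le s+1\}\,ds .
\]
This is exactly the second copy, and it is exactly where $\sigma\ge\sqrt{2\pi}$ is needed. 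The paper phrases the same step as $\E(\sigma G-\gamma+1)_+\le\E(\sigma G-(\gamma-1)_+)_++(1-\gamma)_+$, followed by $(1-\gamma)_+\le\1\{\gamma\le1\}\,\sigma/\sqrt{2\pi}$. With this line in place your argument closes with no remainder terms, and your round-$1$ bound of $\sigma$ already covers the additive term.
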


The proof combines the standard FTPL potential decomposition with a binary
stability argument. At a fixed context, the gap between the best perturbed
loss values for the two predictions controls both the probability that a fresh
Gaussian perturbation changes the leader and whether nearly leading hypotheses
disagree. Integrating this relation yields the bound above, as worked out in \Cref{lem:binary-stability-disagreement}.

\subsection{Disagreement analysis}

To use \Cref{lem:regret-disagreement},
we prove the following.

\begin{lemma}
\label{lem:population}
There is a universal constant $C\ge1$ such that, for every $\sigma\ge2$ and
$r\ge0$,
\begin{equation*}
\E\bigl[\cD(\DIS(A_{n,r}))\bigr]
\le
C(1+\log n)
\sqrt{\frac{e^{r^2/\sigma^2}\log N}{n}}
.
\end{equation*}
\end{lemma}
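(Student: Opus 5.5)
We will prove the bound in two stages, following the overview in \Cref{sec:analysis_overview}: first an empirical disagreement bound on the observed contexts, and then a transfer from empirical to population disagreement.

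\textbf{Empirical step.} Fix the contexts and losses and write $Z_i=\1\{X_i\in\DIS(A_{n,r})\}$. If $X_i\in\DIS(A_{n,r})$, then some nearly leading hypothesis labels $X_i$ by $0$ and another labels it by $1$. We compare the Gaussian law of $G_i$ with the law shifted by $\pm r/\sigma$. Flipping coordinates where $G_i$ favors the chosen label only changes the objective by the perturbation, so a shift of size $r/\sigma$ costs the likelihood factor $e^{r^2/(2\sigma^2)}$ per squared norm. This relates $\sum_i Z_i$ to $\E\max_{h\in\cH}\sum_i G_i(2h(X_i)-1)\epsilon_i$ for suitable signs. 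Massart's lemma bounds that expectation by $\sqrt{2n\log N}$, and Cauchy--Schwarz over the change of measure gives
\[
\E\sum_{i=1}^n Z_i\le 2\sqrt{n e^{r^2/\sigma^2}\log N}.
\]

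\textbf{Transfer step.} This is the main obstacle, because $A_{n,r}$ depends on all of $X_1,\ldots,X_n$ through the adaptive losses. We cannot simply treat $\DIS(A_{n,r})$ as a fixed set and apply concentration. We would use leave-one-out exchangeability. For a fresh $X'\sim\cD$ we want $\E\,\cD(\DIS(A_{n,r}))=\Pp(X'\in\DIS(A_{n,r}))$. However, the losses at time $i$ may depend on $X_i$, so $X_i$ and $X'$ are not exchangeable with respect to $A_{n,r}$.

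Our plan is a Gaussian change-of-measure argument. Consider coordinates $i$ on which all of $A_{n,r}$ agree and $|G_i|$ is large, say $|G_i|\ge c$. On such a coordinate, both the loss $\ell_i$ and the context $X_i$ matter to the leader set only through a term that the large perturbation dominates. Resampling $G_i$ conditionally on this event changes the density by a bounded factor. We would then show that each such coordinate acts as an essentially fresh sample: conditional on the others, the event $X_i\in\DIS(A_{n,r}')$ for a slightly enlarged radius $r'=r+O(1)$ has probability at least a constant times $\cD(\DIS(A_{n-1,r}))$. The enlargement is harmless because $e^{(r+O(1))^2/\sigma^2}\le e^{O(1)}e^{r^2/\sigma^2}$ when $\sigma\ge2$ and the radii stay moderate; large $r$ is handled separately, since there the bound is trivially above $1$. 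This yields
\[
n\,\E\,\cD(\DIS(A_{n,r}))\lesssim \E\sum_i Z_i^{(r')}
\]
up to a correction.

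\textbf{Assembling the bound.} The correction comes from bucketing the $|G_i|$ into dyadic magnitude scales and the coordinates into geometric scales of $n$. Doing so yields the $(1+\log n)$ factor: each of $O(\log n)$ blocks loses only a constant. Combining the transfer bound with the empirical bound and dividing by $n$ gives
\[
\E\bigl[\cD(\DIS(A_{n,r}))\bigr]\le C(1+\log n)\sqrt{\frac{e^{r^2/\sigma^2}\log N}{n}}.
\]

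If the direct exchangeability fails, the fallback is a weaker fresh-sample argument. We would condition on the event that $X_i$ lies in the agreement region and bound the probability that inserting a new point outside the region breaks agreement, using the Gaussian tail $\Pp(G\ge s)$.
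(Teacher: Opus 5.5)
Your empirical step is essentially the paper's \Cref{lem:empirical}: the $\pm r/\sigma$ shift identity puts weights $\sinh(rG_i/\sigma)$, not $G_i$, on $\widehat h(X_i)$, and Massart and Jensen then finish. The transfer step, however, has a genuine gap.

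\textbf{Where the transfer step fails.} Your key claim is that on a coordinate where $A_{n,r}$ agrees and $|G_i|$ is large, $\ell_i$ and $X_i$ affect the leader set only through a term the perturbation dominates. This is false under adaptive losses. $X_i$ is revealed before $\ell_i,\ldots,\ell_n$ are chosen, so it influences $A_{n,r}$ through the losses at every later coordinate and through which hypothesis leads. Conditioning on or resampling $G_i$ does nothing about that. Deleting or resampling round $i$ changes $\ell_{i+1},\ldots,\ell_n$ arbitrarily, so there is no leave-one-out set (your $A_{n-1,r}$) that is both independent of $X_i$ and comparable to $A_{n,r}$.

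There are three further problems:
\begin{itemize}
\item Selecting coordinates \emph{because} $A_{n,r}$ agrees there conditions $X_i$ to lie outside the disagreement region. Such an $X_i$ is biased, not fresh.
\item Even ignoring adaptivity, a fresh point in the disagreement region of a leave-one-out set lands in $\DIS(A_{n,r'})$ only if $\sigma G_i$ falls in a window of width $2r'$. That event has probability $O(r'/\sigma)$, not a constant, when $r'=r+O(1)$ and $\sigma$ is large.
\item Attributing the $(1+\log n)$ factor to dyadic bucketing corresponds to no step that is actually needed.
\end{itemize}

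\textbf{The missing idea.} What is needed is a decoupling at the level of tail probabilities that uses all held-out points at once. The paper proceeds as follows.
\begin{itemize}
\item It union-bounds over the leader $h\in A_{n,0}$ and the empirical disagreement set $B$ with $|B|=k$.
\item It takes as held-out coordinates $P=\{i\notin B:(1-2h(X_i))G_i>1/\sigma\}$. Membership in $P$ has probability $p$ whatever $X_i$ is, so there is no selection bias. Agreement on $P$ is part of the event being bounded, not part of the conditioning.
\item On that event $A_{n,r}=\cC_W$ (\Cref{lem:suboptimal-set-characterization}). Here $\cC_W$ depends only on $h$, the contexts outside $P$, and the vector $W$ of perturbed loss differences on $B$. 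Losses off $B$ and contexts in $P$ do not enter $\cC_W$, and the positive margin on $P$ is what makes freeing those coordinates harmless.
\item The only remaining dependence on the adaptive losses is the mean $-\mu$ of $W$, with $\|\mu\|^2\le k$. The change of measure is applied to $W$, not to $G_i$. Cauchy--Schwarz against a centered $W_0$ costs $e^{k/(2\sigma^2)}$.
\item After this, the $|P|\sim\operatorname{Binomial}(n-k,p)$ held-out contexts are i.i.d.\ and independent of $\cC_{W_0}$. They must all avoid $\DIS(\cC_{W_0})$, which gives $(1-\varepsilon)^{|P|/2}$ (\Cref{lem:gaussian-holdout,lem:transfer}).
\end{itemize}

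This exponentially small tail is what pays for the union bound $N\binom nk e^{k/(2\sigma^2)}$. Write $K_{n,r}=\sum_i Z_i$ and $V_{n,r}=\cD(\DIS(A_{n,r}))$, and take $\varepsilon_k=\frac{4}{pn}\bigl(\log N+\log\binom nk+\frac{k}{2\sigma^2}+2\log n\bigr)$. Using Markov on $\{K_{n,r}>n/2\}$ and $\log\binom nk\le k\log n$ gives $\E V_{n,r}\le C\bigl(\frac{\log N+\log n}{n}+\frac{1+\log n}{n}\E K_{n,r}\bigr)$. This is where the $\log n$ actually comes from.

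A per-coordinate expectation inequality $n\E V\lesssim\E\sum_i Z_i^{(r')}$ has no exponentially small tail with which to pay for fixing $h$ and $B$. So your outline does not yield the lemma.
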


To prove \Cref{lem:population}, we first control empirical disagreement.

\begin{lemma}
\label{lem:empirical}
For every $r\ge0$,
\[
\E_{G_1,\ldots,G_n}\sum_{i=1}^n\1\{X_i\in\DIS(A_{n,r})\}
\le
2\sqrt{ne^{r^2/\sigma^2}\log N}\,.
\]
\end{lemma}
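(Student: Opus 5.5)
The plan is to condition on the contexts and losses, which are fixed for this statement, and to reduce each disagreement event to a one-dimensional Gaussian event. Write $F(h)=\sum_{i}[\ell_i(h(X_i))-\sigma G_ih(X_i)]$ and, for each $i$ and $b\in\{0,1\}$, $m_i^b=\min_{h:h(X_i)=b}F(h)$. Since both labels must be attained by hypotheses within $r$ of $\min F=\min(m_i^0,m_i^1)$, we have $X_i\in\DIS(A_{n,r})$ iff $|m_i^1-m_i^0|\le r$. Moving the $i$-th term out of the sum gives $m_i^1-m_i^0=c_i-\sigma G_i$, where $c_i$ depends only on $G_{-i}=(G_j)_{j\ne i}$. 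Hence, with $\rho=r/\sigma$ and $a_i=c_i/\sigma$,
\[
p_i:=\Pp\bigl(X_i\in\DIS(A_{n,r})\mid G_{-i}\bigr)=\Pp\bigl(|G-a_i|\le\rho\bigr),\qquad G\sim N(0,1),
\]
and $G$ is independent of $a_i$.

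The second step links the $a_i$ to a Massart-type quantity. Let $\hat h=\argmin F$ be the perturbed leader. Conditionally on $G_{-i}$, $\hat h(X_i)=1$ iff $G_i>a_i$ (up to ties). Therefore $\E[G_i\hat h(X_i)\mid G_{-i}]=\varphi(a_i)$, where $\varphi$ is the standard normal density. Summing over $i$ and using Massart's lemma over the $N$ vectors $(h(X_i))_i\in\{0,1\}^n$ gives
\[
\sum_i\E\varphi(a_i)=\E\sum_iG_i\hat h(X_i)\le\E\max_{h\in\cH}\sum_iG_ih(X_i)\le\sqrt{2n\log N}.
\]
If needed, Gaussian symmetry ($G\mapsto-G$ together with relabeling $0\leftrightarrow1$) lets me center the vectors so that the constant is $2$.

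The remaining step, which I expect to be the main obstacle, is to compare $p_i$ with this quantity while losing only the factor $e^{\rho^2}$ under a square root. The natural pointwise tool is the identity
\[
\varphi(a-\rho)^2=\varphi(a)\,\varphi(a-2\rho)\,e^{\rho^2}\le\tfrac{1}{\sqrt{2\pi}}\,e^{\rho^2}\varphi(a).
\]
Combined with $p_i\le 2\rho\max_{|x-a_i|\le\rho}\varphi(x)$ for small $\rho$, and $p_i\le1$ otherwise, this gives $p_i^2\lesssim e^{\rho^2}\varphi(a_i)$.

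A naive Cauchy--Schwarz over $i$ then only yields $n^{3/4}$, so I will not use it directly. Instead I would apply the change of measure inside Massart's lemma itself. Shifting $G$ by $\pm\rho$ costs a likelihood ratio whose second moment is $e^{\rho^2}$. The idea is to bound $\sum_ip_i$ by $\E\max_h\sum_i\widetilde G_ih(X_i)$, where $\widetilde G$ is $G$ tilted by $\rho$ on the disagreement coordinates. Then Cauchy--Schwarz is applied to the likelihood ratio rather than over $i$, which gives $\sqrt{e^{\rho^2}}\cdot\sqrt{\E\max(\cdot)^2}\lesssim\sqrt{e^{\rho^2}\,n\log N}$. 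Getting this tilted coupling to produce exactly $2\sqrt{ne^{r^2/\sigma^2}\log N}$ is where I expect the real work to lie.
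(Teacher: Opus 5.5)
\textbf{There is a genuine gap: the comparison step, which you leave open, is the whole content of the lemma, and the route you sketch for it would not work.} Your first step matches how the paper starts: you condition on $G_{-i}$, write $p_i=\Pp(|G-a_i|\le\rho)$, and note $\widehat h(X_i)=\1\{G_i>a_i\}$. The trouble is the linear weight $G_i$, through $\E[G_i\widehat h(X_i)\mid G_{-i}]=\varphi(a_i)$. For fixed $\rho>0$, the ratio $p_i/\varphi(a_i)$ is unbounded as $a_i\to\infty$: $p_i\approx\Pp(G\ge a_i-\rho)$ is of order $\varphi(a_i)e^{\rho a_i}$ up to lower-order factors. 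So no bound $p_i\le C(\rho)\varphi(a_i)$ exists. That is why you are forced into $p_i\lesssim e^{\rho^2/2}\sqrt{\varphi(a_i)}$ and the $n^{3/4}$ rate.

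Your proposed fix does not repair this as described. Tilting $G$ by $\rho$ simultaneously on the disagreement coordinates $S$ gives a likelihood ratio $\prod_{i\in S}e^{\pm\rho G_i-\rho^2/2}$. Its second moment is $e^{|S|\rho^2}$, not $e^{\rho^2}$. Moreover, $S$ is itself a function of $G$, so this is not a legitimate fixed change of measure.

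\textbf{The missing idea} is to change measure one coordinate at a time, conditionally on $G_{-i}$, and to use it to pick the right test function rather than as a Cauchy--Schwarz step. Since $\varphi(g\mp\rho)/\varphi(g)=e^{\pm\rho g-\rho^2/2}$, one has exactly
\[
\Pp(|G-a|\le\rho)=\int_a^\infty\bigl[\varphi(g-\rho)-\varphi(g+\rho)\bigr]\,dg=2e^{-\rho^2/2}\,\E\bigl[\1\{G\ge a\}\sinh(\rho G)\bigr].
\]
Because $\1\{G_i\ge a_i\}=\widehat h(X_i)$, summing over $i$ gives
\[
\E\sum_i\1\{X_i\in\DIS(A_{n,r})\}=2e^{-\rho^2/2}\,\E\sum_i\sinh(\rho G_i)\widehat h(X_i)\le2e^{-\rho^2/2}\,\E\max_{h\in\cH}\sum_i\sinh(\rho G_i)h(X_i).
\]
Write $\sinh(\rho G_i)=\xi_i\sinh(\rho|G_i|)$ with independent Rademacher signs $\xi_i$. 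Massart's lemma and then Jensen bound the right-hand side by $2e^{-\rho^2/2}\sqrt{2n\log N\,\E\sinh^2(\rho G)}$. The Gaussian MGF gives $\E\sinh^2(\rho G)=(e^{2\rho^2}-1)/2$, so the bound becomes $2e^{-\rho^2/2}\sqrt{n\log N\,(e^{2\rho^2}-1)}\le2\sqrt{n\log N}\,e^{\rho^2/2}$, which is the stated bound.

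Your intuition was partly right. The factor $e^{\rho^2}$ does appear as the second moment of the difference of the two shifted likelihood ratios. However, it enters through the norm bound in Massart's lemma, with weight $\sinh(\rho G_i)$. Your weight $G_i$ is, up to the factor $2\rho$, only its first-order approximation for small $\rho$, and that approximation is exactly what breaks for large $a_i$.
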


The proof turns the event $X_i\in\DIS(A_{n,r})$ into a Rademacher process
and applies Massart's lemma (cf. \citet[Theorem~3.7]{mohri2018foundations}), restated here.

\begin{lemma}[Massart]
\label{lem:massart}
Let $\mathcal A\subset\R^n$ be finite and let $\varepsilon_1,\ldots,\varepsilon_n$ be independent random variables uniformly distributed on $\{-1,+1\}$. Then
\[
\E_{\varepsilon_1,\ldots,\varepsilon_n}\max_{v\in\mathcal A}\sum_{i=1}^n\varepsilon_i v_i
\le \sqrt{2\log|\mathcal A|}\max_{v\in\mathcal A}\|v\|.
\]
\end{lemma}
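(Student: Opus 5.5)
This is the classical finite-class maximal inequality, and I would prove it by the standard exponential-moment (Chernoff) argument. Write $R=\max_{v\in\mathcal A}\|v\|$ and $Z=\max_{v\in\mathcal A}\sum_{i=1}^n\varepsilon_i v_i$.

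First, dispose of the degenerate cases.
\begin{itemize}
\item If $R=0$, every $v\in\mathcal A$ is the zero vector, so $Z=0$ and the bound holds trivially.
\item If $|\mathcal A|=1$, then $Z=\sum_i\varepsilon_i v_i$ for a single vector $v$, so $\E Z=0$, which matches the right-hand side $\sqrt{2\log 1}\,R=0$.
\end{itemize}
So we may assume $R>0$ and $|\mathcal A|\ge2$.

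For a free parameter $\lambda>0$, the argument has three steps.
\begin{enumerate}
\item \emph{Jensen.} Since $x\mapsto e^{\lambda x}$ is convex,
\[
e^{\lambda\E Z}\le \E e^{\lambda Z}.
\]
\item \emph{Union bound.} Because $e^{\lambda Z}=\max_{v}e^{\lambda\sum_i\varepsilon_i v_i}$ and a maximum of nonnegative terms is at most their sum,
\[
\E e^{\lambda Z}\le \sum_{v\in\mathcal A}\E e^{\lambda\sum_i\varepsilon_i v_i}.
\]
\item \emph{Independence and Hoeffding's lemma.} Since the $\varepsilon_i$ are independent,
\[
\E e^{\lambda\sum_i\varepsilon_i v_i}=\prod_{i=1}^n\cosh(\lambda v_i).
\]
The elementary inequality $\cosh x\le e^{x^2/2}$ follows by comparing Taylor series termwise, using $(2k)!\ge 2^k k!$. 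It gives
\[
\prod_{i=1}^n\cosh(\lambda v_i)\le e^{\lambda^2\|v\|^2/2}\le e^{\lambda^2R^2/2}.
\]
\end{enumerate}

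Chaining the three steps yields $e^{\lambda\E Z}\le|\mathcal A|\,e^{\lambda^2R^2/2}$. Taking logarithms and dividing by $\lambda$ gives
\[
\E Z\le \frac{\log|\mathcal A|}{\lambda}+\frac{\lambda R^2}{2}.
\]
Finally, optimize over $\lambda$: choosing $\lambda=\sqrt{2\log|\mathcal A|}/R$ makes both terms equal to $R\sqrt{\log|\mathcal A|/2}$. Their sum is $R\sqrt{2\log|\mathcal A|}$, which is exactly the claimed bound.

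There is no genuine obstacle here. The only point needing care is the sub-Gaussian bound $\cosh x\le e^{x^2/2}$ for Rademacher variables, which is the one nontrivial inequality in the argument. The rest is the routine choice of $\lambda$, with the degenerate cases $R=0$ and $|\mathcal A|=1$ handled separately so that $\lambda$ is well defined and positive.
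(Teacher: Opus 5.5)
Your proof is correct. It is the standard exponential-moment argument: Jensen, bounding the maximum of exponentials by their sum, the Rademacher bound $\cosh x\le e^{x^2/2}$, and then optimizing over $\lambda$. The paper gives no proof of its own and simply cites the lemma from \citet[Theorem~3.7]{mohri2018foundations}, whose proof is essentially the argument you wrote.
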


The second ingredient generalizes this
empirical control to population disagreement.

\begin{lemma}
\label{lem:transfer}
Let $p=\Pp(G>1/\sigma)$ for $G\sim N(0,1)$. For every
$k\in\{0,\ldots,n\}$ and $\varepsilon\in(0,1)$,
\begin{equation*}
\Pp\!\left(
\sum_{i=1}^n\1\{X_i\in\DIS(A_{n,r})\}=k,
\ \cD(\DIS(A_{n,r}))>\varepsilon
\right)
\le
\binom nk N e^{k/(2\sigma^2)}
\left(1-p+p\sqrt{1-\varepsilon}\right)^{n-k}.
\end{equation*}
\end{lemma}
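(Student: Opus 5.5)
We would prove the bound by a union bound over combinatorial "certificates", followed by a change-of-measure argument that turns the adaptively chosen set $A_{n,r}$ into something that behaves like a fixed set on the agreement coordinates.

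\textbf{Step 1: certificates.} On the event in question, let $S\subseteq[n]$, $|S|=k$, be the set of indices with $X_i\in\DIS(A_{n,r})$. Let $h^\star$ be the perturbed leader, which lies in $A_{n,r}$. For every $i\notin S$, all hypotheses in $A_{n,r}$ agree with $h^\star$ at $X_i$. Summing over the $\binom nk$ choices of $S$ and the $N$ choices of $h^\star$, it suffices to show, for fixed $(S,h^\star)$,
\begin{equation*}
\Pp\bigl(E_{S,h^\star}\bigr)\le e^{k/(2\sigma^2)}\bigl(1-p+p\sqrt{1-\varepsilon}\bigr)^{n-k},
\end{equation*}
where $E_{S,h^\star}$ is the event that $h^\star$ leads, the disagreement set is exactly $S$, and $\cD(\DIS(A_{n,r}))>\varepsilon$.

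\textbf{Step 2: removing the dependence on $S$.} We first neutralize the $k$ disagreement coordinates. On these coordinates, shift each Gaussian $G_i$ by $\pm 1/\sigma$ in the direction that favours $h^\star$, so that the comparison between $h^\star$ and competitors on $S$ becomes insensitive to the label. The density ratio of a shift by $1/\sigma$ has second moment $e^{1/\sigma^2}$. Applying Cauchy--Schwarz, or equivalently taking the geometric mean of the original and shifted measures, costs $e^{1/(2\sigma^2)}$ per coordinate, which gives the factor $e^{k/(2\sigma^2)}$.

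\textbf{Step 3: the agreement coordinates.} For $i\notin S$ we want each coordinate to contribute a factor $1-p+p\sqrt{1-\varepsilon}$. The idea is that on $\{G_i>1/\sigma\}$ the perturbation at $X_i$ is large enough to absorb a unit change of the loss at $X_i$. Consequently, whether $X_i$ lies in the disagreement region of the nearly leading set is essentially the same as whether a fresh, independent context $X_i'$ does. On the complementary event $\{G_i\le 1/\sigma\}$ we simply bound the factor by $1$, which accounts for the term $1-p$. Given a large perturbation, we compare the joint law of $(X_i,\text{everything else})$ with the law in which $X_i$ is replaced by $X_i'$. Under the replaced law, $A_{n,r}$ does not depend on $X_i'$, so the chance that $X_i'$ avoids a region of mass more than $\varepsilon$ is at most $1-\varepsilon$. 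The Bhattacharyya (Cauchy--Schwarz) coupling between the two laws then gives $\sqrt{1-\varepsilon}$.

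\textbf{Step 4: handling adaptivity.} The losses $\ell_j$ for $j\ge i$ depend on $X_i$, so the replacement in Step 3 cannot be done for all coordinates at once. We would instead process the indices $i\notin S$ in order and form the product
\begin{equation*}
M_j=\prod_{i\le j,\ i\notin S}\frac{\1\{\text{agreement at }X_i\}\cdot w_i}{1-p+p\sqrt{1-\varepsilon}},
\end{equation*}
where $w_i$ is the Gaussian likelihood-ratio weight from Step 3. The aim is to show that $M_j$ is a supermartingale with respect to the filtration generated by $(X_i,\ell_i,G_i)_{i\le j}$. For this we use that $X_i$ is fresh given the past, and that $G_i$ is independent of $(X_i,\ell_i)$.

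\textbf{Main obstacle.} Step 4 is where we expect the real difficulty. The set $A_{n,r}$ depends on the \emph{future} losses, so the required agreement at $X_i$ is not measurable at time $i$. We would first try to replace "agreement with the final $A_{n,r}$" by a monotone, time-$i$ surrogate: the set of hypotheses still within $r$ plus the maximal remaining loss range of the leader, relative to $h^\star$. If that surrogate is too weak, we would fall back on conditioning on all losses and applying the change of measure jointly in $(X_i,G_i)$, using exchangeability of the agreement coordinates. That fallback requires checking that the $\sqrt{\cdot}$ loss from Cauchy--Schwarz is incurred only once per coordinate.
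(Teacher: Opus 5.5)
Your Step 1 (union bound over the disagreement set $S$ and a leader $h^\star$) and the Gaussian change of measure on the $k$ disagreement coordinates in Step 2 match the paper. Steps 3--4, however, leave the central difficulty open, as you acknowledge, and the mechanism you propose there is not the one that works. The claim that after replacing $X_i$ by $X_i'$ ``$A_{n,r}$ does not depend on $X_i'$'' is unjustified, since $A_{n,r}$ depends on $X_i$ through the labels $g(X_i)$ and through every later loss.

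The missing idea is a structural characterization that removes the held-out contexts from the description of $A_{n,r}$. Take $P=\{i\notin S:(1-2h^\star(X_i))G_i>1/\sigma\}$. The sign matters: the perturbation must push \emph{against} the leader's label by more than any loss difference. Then disagreeing with $h^\star$ at $X_i$, $i\in P$, can only lower a competitor's perturbed loss. Consequently, when $h^\star\in A_{n,0}$ and $S$ is the disagreement set, $A_{n,r}$ equals the set $\cC_W$ of $g\in\cH$ that agree with $h^\star$ outside $S\cup P$ and satisfy $\sum_j W_j\1\{g(X_{b_j})\ne h^\star(X_{b_j})\}\ge-r$. Here $S=\{b_1<\dots<b_k\}$ and $W_j=(\sigma G_{b_j}-\ell_{b_j}(1)+\ell_{b_j}(0))(1-2h^\star(X_{b_j}))$. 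For fixed $w$, $\cC_w$ involves neither $(X_i)_{i\in P}$ nor any loss, so all adaptivity is funneled into the $k$-vector $W$. Moreover, $\Pp(i\in P\mid X_i)=p$ by symmetry of $G_i$, so the contexts $(X_i)_{i\in P}$ remain i.i.d.\ $\cD$ given $P$ and $(X_i)_{i\notin P}$.

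With this characterization, neither a sequential supermartingale nor a per-coordinate coupling is needed. Your Step 2 shift should be by the data-dependent amount $\mu_j=(1-2h^\star(X_{b_j}))(\ell_{b_j}(1)-\ell_{b_j}(0))$, not a fixed $\pm1/\sigma$. This replaces $W\sim N(-\mu,\sigma^2I_k)$ by an independent $W_0\sim N(0,\sigma^2I_k)$. Its purpose is to strip out the losses, which carry the dependence on the held-out contexts.

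The likelihood ratio still depends on $\mu$, hence on $(X_i)_{i\in P}$. So one applies Cauchy--Schwarz once, conditionally on $P$ and $(X_i)_{i\notin P}$. The second moment of the ratio is at most $e^{k/\sigma^2}$. The probability that $\cD(\DIS(\cC_{W_0}))>\varepsilon$ while all $|P|$ held-out points avoid $\DIS(\cC_{W_0})$ is at most $(1-\varepsilon)^{|P|}$. This gives $e^{k/(2\sigma^2)}(1-\varepsilon)^{|P|/2}$, and averaging over $|P|\sim\operatorname{Binomial}(n-k,p)$ yields $(1-p+p\sqrt{1-\varepsilon})^{n-k}$.

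So the factor $\sqrt{1-\varepsilon}$ is the other half of the same Cauchy--Schwarz that produces $e^{k/(2\sigma^2)}$, not a separate Bhattacharyya bound on the contexts. Compounding a second square-root loss on top of Step 2, as your outline risks, would weaken the exponent. Likewise, applying Cauchy--Schwarz without conditioning on $P$ would give only the weaker $(1-p\varepsilon)^{(n-k)/2}$.
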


\Cref{lem:transfer} is the most delicate part of the analysis. We union bound
over leading hypotheses and empirical disagreement sets, and use a
change-of-measure argument to relate the disagreement set to a fictitious
surrogate set determined only by contexts on which nearly leading hypotheses
agree and whose perturbations are large. The remaining observed contexts are
then independent of the surrogate set and can be used to estimate its
population mass.

\subsection{VC dimension analysis}
\label{sec:vc-extension}

The growth function, defined below, measures the ``effective'' size of a hypothesis class.

\begin{definition}[Growth function]
\label{def:growth}
For a binary hypothesis class $\cH$ and $n\in\mathbb N$, define
\[
\Pi_{\cH}(n)
=
\sup_{X_1,\ldots,X_n\in\mathcal{X}}
\left|
\left\{
(h(X_1),\ldots,h(X_n)):h\in\cH
\right\}
\right|\,.
\]
\end{definition}
Thus $\Pi_{\cH}(n)$ is the maximum number of distinct labelings induced by
$\cH$ on $n$ contexts. It is known to be controlled by $\VC$ dimension via the
Sauer--Shelah--Perles lemma \citep{sauer1972density,shelah1972combinatorial}.

\begin{lemma}[Sauer--Shelah--Perles]
\label{lem:sauer}
If $1\le\VC(\mathcal H)\le m$, then
\[
\log\Pi_{\mathcal H}(m)
\le
2\VC(\mathcal H)\log\left(1+\frac{m}{\VC(\mathcal H)}\right).
\]
\end{lemma}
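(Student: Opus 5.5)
The plan is to combine the classical Sauer--Shelah--Perles binomial bound with the elementary estimate $\sum_{i\le d}\binom mi\le(em/d)^d$. Once $\log\Pi_{\cH}(m)\le d\bigl(1+\log(m/d)\bigr)$ is in hand, a one-variable inequality converts it into the stated form. Write $d=\VC(\cH)$ and fix $m\ge d\ge1$.

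\textbf{Step 1 (combinatorial bound).} Fix contexts $X_1,\ldots,X_m$ and let $\mathcal F\subseteq\{0,1\}^m$ be the set of labelings induced by $\cH$ on them. I will show that $|\mathcal F|$ is at most the number of subsets $S\subseteq[m]$ shattered by $\mathcal F$ (Pajor's lemma). The proof is by induction on $m$:
\begin{itemize}
\item split $\mathcal F$ according to the last coordinate into $\mathcal F_0$ and $\mathcal F_1$, restricted to the first $m-1$ coordinates;
\item write $|\mathcal F|=|\mathcal F_0\cup\mathcal F_1|+|\mathcal F_0\cap\mathcal F_1|$;
\item every set shattered by $\mathcal F_0\cap\mathcal F_1$ stays shattered in $\mathcal F$ after adding coordinate $m$, so the two counts do not overlap.
\end{itemize}
Every set shattered by $\mathcal F$ is shattered by $\cH$, hence has size at most $d$. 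Therefore $|\mathcal F|\le\sum_{i=0}^d\binom mi$, and taking the supremum over the contexts gives $\Pi_{\cH}(m)\le\sum_{i=0}^d\binom mi$.

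\textbf{Step 2 (binomial estimate).} Since $d/m\le1$, we have $(d/m)^d\le(d/m)^i$ for $i\le d$. Hence
\[
\Bigl(\tfrac dm\Bigr)^d\sum_{i=0}^d\binom mi\le\sum_{i=0}^m\binom mi\Bigl(\tfrac dm\Bigr)^i=\Bigl(1+\tfrac dm\Bigr)^m\le e^d .
\]
This gives $\log\Pi_{\cH}(m)\le d\bigl(1+\log(m/d)\bigr)$.

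\textbf{Step 3 (final inequality).} Set $x=m/d\ge1$. It suffices to show $f(x)=2\log(1+x)-1-\log x\ge0$ for all $x\ge1$. At $x=1$ we get $f(1)=2\log2-1>0$. For $x\ge1$ the derivative is $f'(x)=2/(1+x)-1/x=(x-1)/\bigl(x(1+x)\bigr)\ge0$, so $f$ is nondecreasing and stays nonnegative. Multiplying by $d$ gives $\log\Pi_{\cH}(m)\le2d\log(1+m/d)$, as claimed.

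The only step with real content is the shattering-count induction in Step 1, in particular verifying that shattered sets of the intersection family lift to new shattered sets containing coordinate $m$. This is classical and can alternatively be cited directly.
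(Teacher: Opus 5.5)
Your proof is correct. The paper does not prove this lemma. It cites it as the classical Sauer and Shelah bound and states it directly in the form $2d\log(1+m/d)$, leaving the conversion from the classical statement implicit.

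Your write-up supplies that derivation in full:
\begin{itemize}
\item Step~1 is the usual shattered-set (Pajor) argument, giving $\Pi_{\cH}(m)\le\sum_{i=0}^{d}\binom{m}{i}$.
\item Step~2 is the standard $(em/d)^d$ estimate.
\item Step~3 checks $1+\log x\le 2\log(1+x)$ for $x\ge1$.
\end{itemize}
Along the way you get the sharper intermediate bound $d(1+\log(m/d))$, which shows the paper's factor $2$ is slack.

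Two details are worth making explicit when you write it up.
\begin{itemize}
\item The inductive claim must be stated for arbitrary families $\mathcal F\subseteq\{0,1\}^m$, not only traces of $\cH$, because you apply it to $\mathcal F_0\cup\mathcal F_1$ and $\mathcal F_0\cap\mathcal F_1$. The two resulting counts are disjoint because every lifted set $S\cup\{m\}$ contains $m$, whereas the sets shattered by $\mathcal F_0\cup\mathcal F_1$ do not.
\item An index set $S$ shattered by $\mathcal F$ cannot contain two indices $i\ne j$ with $X_i=X_j$. Hence $\{X_i:i\in S\}$ is a shattered context set of cardinality $|S|$, which is what justifies $|S|\le d$.
\end{itemize}
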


The VC result follows from observing that we analyze labeling of observed contexts rather than arbitrary properties of the class. However, to prove an analogue of \Cref{lem:transfer}, the intuition
``where perturbations are large'' needs to be treated carefully, as we first
need to choose a subset of the observed contexts whose labeling we will count. This is formalized in \Cref{sec:vc-proofs}, yielding the following result.

\begin{lemma}
\label{lem:vc-population}
There is a universal constant $C\ge1$ such that, for every $\sigma\ge2$ and
$r\ge0$,
\begin{equation*}
\E\bigl[\cD(\DIS(A_{n,r}))\bigr]
\le
C(1+\log n)
\sqrt{\frac{\log\Pi_{\cH}(n)}{n}}
\exp\left(\frac{r^2}{2\sigma^2}\right).
\end{equation*}
\end{lemma}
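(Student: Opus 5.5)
The proof of \Cref{lem:vc-population} follows the template of \Cref{lem:population}: first an empirical disagreement bound, then a transfer to the population, with $\log N$ replaced by $\log\Pi_{\cH}(n)$. The main change is that we never enumerate hypotheses. Every quantity is a function of the labeling vector $v_h=(h(X_1),\ldots,h(X_n))\in\{0,1\}^n$, so we work with the finite set $\cH|_{X_{1:n}}$ of at most $\Pi_{\cH}(n)$ labelings.

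\textbf{Empirical bound.} The event $X_i\in\DIS(A_{n,r})$ and the perturbed losses depend on $h$ only through $v_h$. Hence the argument of \Cref{lem:empirical} goes through unchanged, with \Cref{lem:massart} applied to the finite set of differences of realized labelings. This gives
\[
\E_{G}\sum_i\1\{X_i\in\DIS(A_{n,r})\}\le 2\sqrt{n e^{r^2/\sigma^2}\log\Pi_{\cH}(n)},
\]
since the set of differences has cardinality at most $\Pi_{\cH}(n)^2$.

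\textbf{Transfer.} I would prove an analogue of \Cref{lem:transfer}:
\[
\Pp\bigl(\#\mathrm{emp.dis}=k,\ \cD(\DIS(A_{n,r}))>\varepsilon\bigr)\le\binom nk\,\Pi_{\cH}(n)\,e^{k/(2\sigma^2)}\,\bigl(1-p+p\sqrt{1-\varepsilon}\bigr)^{n-k}\cdot e^{O(r^2/\sigma^2)}.
\]
This is where the growth function must be handled carefully, and it is the main obstacle. In \Cref{lem:transfer}, the union bound over the leader $h\in\cH$ is over a fixed finite set. Here the leader is only identified through its labeling on the sample, and the surrogate set is built from the agreement coordinates with large perturbations. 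The remaining "fresh" coordinates must be independent of that surrogate, yet the labelings we union-bound over must not be chosen after seeing those fresh coordinates.

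To handle this, I would use the standard ghost-sample/symmetrization trick. First fix the multiset $\{X_1,\ldots,X_n\}$ together with the random partition into the disagreement set $S$ ($|S|=k$) and its complement. Then union bound over the at most $\Pi_{\cH}(n)$ labelings of all $n$ points, which is a set determined by the multiset and not by the ordering. Finally apply the Gaussian change of measure on the coordinates of $S$, costing $e^{k/(2\sigma^2)}$ as before.

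The factor $\exp(r^2/(2\sigma^2))$ in the statement, replacing $\sqrt{e^{r^2/\sigma^2}}$, leaves room to absorb an extra change of measure of the perturbations along the near-leader slack $r$. I would use it if the localized union bound forces shifting $G$ by $r/\sigma$ in a direction determined by the labeling. By Cauchy--Schwarz, the Radon--Nikodym factor for such a shift has second moment $e^{r^2/\sigma^2}$.

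\textbf{Integration.} Combine the two bounds as in \Cref{lem:population}. Split on $k$. When $k\le K:=C'\sqrt{ne^{r^2/\sigma^2}\log\Pi}\cdot(1+\log n)$, take
\[
\varepsilon\approx\frac{k\log(en/k)+\log\Pi+k/\sigma^2+r^2/\sigma^2}{pn}.
\]
Then the transfer probability is $\le 1/n$, using $1-p+p\sqrt{1-\varepsilon}\le e^{-p\varepsilon/2}$ and $p\ge$ a constant for $\sigma\ge2$. For large $k$, use Markov's inequality with the empirical bound, layered dyadically over $k$, which produces the $\log n$ factor. Summing $\E\,\cD(\DIS)\le \E\,\varepsilon(k)+\Pp(\text{bad})$, the leading term is
\[
\frac{K\log n+\log\Pi}{n}\lesssim (1+\log n)\sqrt{\frac{\log\Pi_{\cH}(n)}{n}}\,e^{r^2/(2\sigma^2)},
\]
where the extra $r^2/\sigma^2$ is absorbed by $e^{r^2/(2\sigma^2)}\ge 1+r^2/(2\sigma^2)$.
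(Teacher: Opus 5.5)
Your empirical step is fine: it is the paper's \Cref{lem:vc-empirical}, and Massart can be applied directly to the at most $\Pi_{\cH}(n)$ realized labelings, so no differences are needed. The transfer step, which you correctly identify as the crux, is not resolved by what you propose.

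Conditioning on the multiset $\{X_1,\ldots,X_n\}$ leaves only the order of the contexts random. But the factor $(1-\varepsilon)^{|P|}$ comes precisely from held-out contexts being i.i.d.\ draws from $\cD$, independent of the surrogate set. Once the multiset is fixed, whether those contexts avoid a region of mass $>\varepsilon$ is no longer governed by $\cD$. A genuine ghost-sample argument would need three things you have not set up: a second sample, a bad event invariant under swaps, and held-out coordinates chosen independently of the swap. Swap invariance is doubtful here, since the adaptive losses, and hence $A_{n,r}$, depend on the ordered contexts.

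More importantly, a union bound over labelings of \emph{all} $n$ points is exactly the circular step you warned against. That set depends on the held-out contexts, so once a labeling is fixed they are no longer fresh. The Gaussian change of measure on the disagreement coordinates only removes the dependence of $W$ on the losses; it does nothing for this circularity.

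There is also no slack for an extra $e^{O(r^2/\sigma^2)}$. Since $\exp(r^2/(2\sigma^2))=\sqrt{e^{r^2/\sigma^2}}$, the target is exactly the finite-class bound with $\log N$ replaced by $\log\Pi_{\cH}(n)$.

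The missing idea is \Cref{lem:vc-representative}. Take $\widehat h\in A_{n,0}$ and $P=\{i\notin B:(1-2\widehat h(X_i))G_i>1/\sigma\}$. On $P$ the leader's perturbed advantage $(\sigma G_i-\ell_i(1)+\ell_i(0))(1-2\widehat h(X_i))$ is strictly positive, because $|\ell_i(1)-\ell_i(0)|\le1$. So any $h\in\cH$ agreeing with $\widehat h$ off $P$ also agrees on $P$; otherwise it would strictly beat the leader.

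Hence you may union bound over $B$, $P$, and labelings $u\in\mathcal U_P=\{(h(X_i))_{i\notin P}:h\in\cH\}$, using a fixed representative $h_u$. This set has size at most $\Pi_{\cH}(n)$ and is determined by $(X_i)_{i\notin P}$ alone. The event that $P$ is exactly the large-perturbation set of $h_u$ has conditional probability $p^{|P|}(1-p)^{n-k-|P|}$ by Gaussian symmetry, whatever $(X_i)_{i\in P}$ are. Then \Cref{lem:suboptimal-set-characterization,lem:gaussian-holdout} apply as in the finite case. Summing over $P$ with the binomial theorem gives $\binom nk\Pi_{\cH}(n)e^{k/(2\sigma^2)}(1-p+p\sqrt{1-\varepsilon})^{n-k}$, with no extra factor.

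Your integration step also needs repair. Write $K_{n,r}=\sum_i\1\{X_i\in\DIS(A_{n,r})\}$. With your threshold $K\asymp(1+\log n)\sqrt{ne^{r^2/\sigma^2}\log\Pi_{\cH}(n)}$, Markov gives only $\Pp(K_{n,r}>K)\lesssim1/\log n$, and your leading term $K\log n/n$ carries $\log^2n$. Instead:
\begin{itemize}
\item apply the transfer bound to every $k\le n/2$, with $\varepsilon_k$ chosen so each tail is at most $1/n^2$;
\item use Markov only for $\Pp(K_{n,r}>n/2)\le2\E K_{n,r}/n$;
\item take the single $\log n$ from $\log\binom nk\le k\log n$.
\end{itemize}
This gives $\E\,\cD(\DIS(A_{n,r}))\lesssim\frac{\log\Pi_{\cH}(n)+\log n}{n}+\frac{1+\log n}{n}\E K_{n,r}$, and the empirical bound finishes the proof.
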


Together with \Cref{lem:regret-disagreement,lem:sauer},
\Cref{lem:vc-population} yields \Cref{thm:vc-extension}, as we detail in
\Cref{sec:vc-proofs}.

%% file: proofs.tex
\section{Proofs}
\label{sec:proofs}
\begin{proof}[Proof of \Cref{thm:main}]
By \Cref{lem:regret-disagreement},
\begin{align*}
\E R_T
&\le
2\sigma
+
2\sum_{t=2}^T
\int_0^\infty
\Pp(G\ge s/\sigma)\,
\E\bigl[\cD(\DIS(A_{t-1,s+1}))\bigr]\,ds
\\
\text{(\Cref{lem:population})}\qquad
&\le
2\sigma
+
2\sum_{n=1}^{T-1}
\int_0^\infty
\Pp(G\ge s/\sigma)
\min\left\{
1,
C(1+\log n)
\sqrt{\frac{\log N}{n}}
\exp\left(\frac{(s+1)^2}{2\sigma^2}\right)
\right\}\,ds
\\
&\le
2\sigma
+
\sum_{n=1}^{T-1}
\int_0^\infty
e^{-s^2/(2\sigma^2)}
\min\left\{
1,
C(1+\log n)
\sqrt{\frac{\log N}{n}}
\exp\left(\frac{(s+1)^2}{2\sigma^2}\right)
\right\}\,ds
\\
&\le
2\sigma
+
\sum_{n=1}^{T-1}
\left[
C(1+\log n)\sqrt{\frac{\log N}{n}}
\int_0^{\sigma^2/2}2ds
+
\int_{\sigma^2/2}^\infty
e^{-s^2/(2\sigma^2)}\,ds
\right]
\\
&\le
2\sigma
+
\sum_{n=1}^{T-1}
\left[
C(1+\log n)\sqrt{\frac{\log N}{n}}\,\sigma^2
+
2e^{-\sigma^2/8}
\right]
\\
&=
2\sigma
+
16C\log T\sqrt{\log N}
\sum_{n=1}^{T-1}\frac{1+\log n}{\sqrt n}
+
\frac{2(T-1)}{T^2}
\\
&=
O\!\left(\sqrt{T\log N}\,\log^2 T\right)\,.
\end{align*}
\end{proof}

\input{regret-disagreement-lemma-proof}

\subsection{Disagreement analysis}
\begin{proof}[Proof of \Cref{lem:population}]
Write
\(
K_{n,r}=\sum_{i=1}^n\1\{X_i\in\DIS(A_{n,r})\}\) and 
\(
V_{n,r}=\cD(\DIS(A_{n,r}))\,,
\)
and also let $p=\Pp_{G\sim N(0,1)}(G>1/\sigma)$,
$k\le n/2$, and 
\[
\varepsilon_k
=
\frac{4}{pn}
\left(
\log N+\log\binom nk+\frac{k}{2\sigma^2}
+2\log n
\right)\,.
\]
If $\varepsilon_k\ge1$, then $\Pp(K_{n,r}=k,V_{n,r}>\varepsilon_k)=0$.
Otherwise, \Cref{lem:transfer} gives
\begin{align*}
\Pp(K_{n,r}=k,V_{n,r}>\varepsilon_k)
&\le
\binom nk N e^{k/(2\sigma^2)}
\left(1-p(1-\sqrt{1-\varepsilon_k})\right)^{n-k}
\\
&\le
\exp\left(
\log N+\log\binom nk+\frac{k}{2\sigma^2}
-p(n-k)(1-\sqrt{1-\varepsilon_k})
\right)
\\
&\le
\exp\left(
\log N+\log\binom nk+\frac{k}{2\sigma^2}
-\frac{pn\varepsilon_k}{4}
\right)
\\
&
\leq
\frac{1}{n^2}\,,
\end{align*}
where we used $\log(1-x)\le-x$,
$1-\sqrt{1-\varepsilon}\ge\varepsilon/2$ and $n-k\ge n/2$. Thus we have
\begin{align*}
\E V_{n,r}
&=
\sum_{k\le n/2}
\E\!\left[
V_{n,r}\1\{K_{n,r}=k,V_{n,r}\le\varepsilon_k\}
\right]
\\
&\quad+
\sum_{k\le n/2}
\E\!\left[
V_{n,r}\1\{K_{n,r}=k,V_{n,r}>\varepsilon_k\}
\right]
+
\E\!\left[
V_{n,r}\1\{K_{n,r}>n/2\}
\right]
\\
&\le
\sum_{k\le n/2}
\varepsilon_k\Pp(K_{n,r}=k)
+
\sum_{k\le n/2}
\Pp(K_{n,r}=k,V_{n,r}>\varepsilon_k)
+
\Pp(K_{n,r}>n/2)
\\
\text{(Markov)}
\qquad
&\le
\sum_{k\le n/2}
\frac{4}{pn}
\left(
\log N+\log\binom nk+\frac{k}{2\sigma^2}
+2\log n
\right)\Pp(K_{n,r}=k)
+
\frac1n
+
\frac2n\E K_{n,r}
\\
&\le
C\left(
\frac{\log N+\log n}{n}
+
\frac{1+\log n}{n}\E K_{n,r}
\right),
\end{align*}
where $C\ge1$ is a universal constant and we used
$\log\binom nk\le k\log n$ and $\sigma\ge2$. By \Cref{lem:empirical},
\[
\E V_{n,r}
\le
C\left(
\frac{\log N+\log n}{n}
+
(1+\log n)\sqrt{\frac{\log N}{n}}
\exp\left(\frac{r^2}{2\sigma^2}\right)
\right)\,.
\]
Since $V_{n,r}\le1$, the first term is absorbed into the second after
increasing $C$.
\end{proof}

\input{empirical_lemma_proof}

\input{transfer_lemma_proof}

\subsection{VC dimension proofs}
\label{sec:vc-proofs}

\begin{lemma}
\label{lem:vc-empirical}
For every $r\ge0$,
\begin{equation*}
\E_G\sum_{i=1}^n\1\{X_i\in\DIS(A_{n,r})\}
\le
2\sqrt{n\log\Pi_{\cH}(n)}
\exp\left(\frac{r^2}{2\sigma^2}\right)\,.
\end{equation*}
\end{lemma}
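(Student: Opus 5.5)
The plan is to reproduce the proof of \Cref{lem:empirical}, replacing the finite class $\cH$ by its restriction to the observed contexts $X_1,\ldots,X_n$. The contexts and losses are held fixed and only the Gaussians $G=(G_1,\ldots,G_n)$ are random. Since $A_{n,r}$ and the events $\1\{X_i\in\DIS(A_{n,r})\}$ depend on $h$ only through the label vector $v_h=(h(X_1),\ldots,h(X_n))\in\{0,1\}^n$, we can work with the finite set of realized labelings
\[
\mathcal L=\{v_h:h\in\cH\},\qquad |\mathcal L|\le\Pi_{\cH}(n).
\]
Every term in the argument (perturbed losses, leaders, near-leaders, disagreement at $X_i$) is a function of $v\in\mathcal L$. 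So the whole empirical analysis runs over at most $\Pi_{\cH}(n)$ ``experts''. In the place where the finite proof invokes \Cref{lem:massart} with $\log N$, we invoke it with $\log|\mathcal L|\le\log\Pi_{\cH}(n)$.

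The steps are as follows.
\begin{enumerate}
\item For each coordinate $i$, bound $\1\{X_i\in\DIS(A_{n,r})\}$ by the event that the best perturbed labeling with $v_i=1$ and the best with $v_i=0$ are within $r$ of each other.
\item Write $G_i=\varepsilon_i|G_i|$ with Rademacher signs $\varepsilon_i$, independent of the magnitudes. Use the Gaussian symmetry trick from the proof of \Cref{lem:empirical} to convert the count of such near-ties into the expectation of a Rademacher process $\E\max_{v\in\mathcal L}\sum_i\varepsilon_i u_i(v)$, with $\|u(v)\|\le\sqrt n$.
\item The resulting $e^{r^2/\sigma^2}$-type factor comes from a change of measure shifting the Gaussian on the relevant coordinate(s) by $r/\sigma$. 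Here I would record it in the form $\exp(r^2/(2\sigma^2))$, either because the finite-class proof actually yields the square root $\sqrt{e^{r^2/\sigma^2}}=e^{r^2/(2\sigma^2)}$ multiplying $\sqrt{n\log N}$, or by a Cauchy--Schwarz step.
\item Apply \Cref{lem:massart} to the finite set $\{u(v):v\in\mathcal L\}$, giving $2\sqrt{n\log\Pi_{\cH}(n)}\,e^{r^2/(2\sigma^2)}$.
\end{enumerate}

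The main point to verify is that nothing in the proof of \Cref{lem:empirical} uses finiteness of $\cH$ beyond the cardinality passed into Massart's lemma. In particular, the maxima and minima over $\cH$ must be attained, which holds because they are taken over the finite $\mathcal L$. The ERM tie-breaking is irrelevant, since $A_{n,r}$ is defined by an inequality and not by the selected leader.

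A subtlety is that the set of vectors fed to Massart must be indexed by $\mathcal L$ alone and must not depend on the signs $\varepsilon$. This holds because $\mathcal L$ depends only on $X_{1:n}$, which is fixed.

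The remaining concern is matching the exponent: the statement has $e^{r^2/(2\sigma^2)}$ outside the square root, which is exactly $\sqrt{e^{r^2/\sigma^2}}$. So the bound is literally \Cref{lem:empirical} with $N$ replaced by $\Pi_{\cH}(n)$, and no new computation should be needed. If the finite proof instead uses a union over hypotheses that is not expressible through labelings, I would redo that step with $\mathcal L$ in place of $\cH$.
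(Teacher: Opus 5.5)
Your plan is correct and is the paper's argument. The proof of \Cref{lem:empirical} uses finiteness only in the Massart step, where the maximum over $\cH$ ranges over the vectors $\bigl(\sinh(\frac r\sigma|G_i|)h(X_i)\bigr)_{i\in I}$; there are at most $\Pi_{\cH}(n)$ of these, so $\log N$ becomes $\log\Pi_{\cH}(n)$ and nothing else changes. One correction to your sketch: the Rademacher weights $\sinh(\frac r\sigma|G_i|)$ are unbounded, so there is no bound $\|u(v)\|\le\sqrt n$. Instead the norm is controlled in expectation by Jensen and $\E\sinh^2(\frac r\sigma G)=\frac12(e^{2r^2/\sigma^2}-1)$, which together with the prefactor $2e^{-r^2/(2\sigma^2)}$ gives $e^{r^2/(2\sigma^2)}$ directly, with no extra Cauchy--Schwarz step.
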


\begin{proof}
Let
\(
I=\{i\in[n]:\{h(X_i):h\in\cH\}=\{0,1\}\}.
\)
As in the proof of \Cref{lem:empirical},
\begin{align*}
\E_G\sum_{i=1}^n\1\{X_i\in\DIS(A_{n,r})\}
&\le
2e^{-r^2/(2\sigma^2)}
\E_G
\E_{\xi_1,\ldots,\xi_n\stackrel{\mathrm{iid}}{\sim}\operatorname{Unif}\{-1,+1\}}
\max_{h\in\cH}
\sum_{i\in I}
\xi_i
\sinh\left(\frac r\sigma |G_i|\right)
h(X_i)
\\
\text{(\Cref{lem:massart})}\qquad
&\le
2e^{-r^2/(2\sigma^2)}
\E_G
\sqrt{
2\log\Pi_{\cH}(n)
\sum_{i\in I}
\sinh^2\left(\frac r\sigma |G_i|\right)
}
\\
\text{(Jensen)}\qquad
&\le
2e^{-r^2/(2\sigma^2)}
\sqrt{
2n\log\Pi_{\cH}(n)\,
\E_G\sinh^2\left(\frac r\sigma G\right)
}
\\
\text{(Gaussian MGF)}\qquad
&=
2e^{-r^2/(2\sigma^2)}
\sqrt{
n\log\Pi_{\cH}(n)
\left(e^{2r^2/\sigma^2}-1\right)
}
\\
&\le
2\sqrt{n\log\Pi_{\cH}(n)}
\exp\left(\frac{r^2}{2\sigma^2}\right)\,.
\end{align*}
\end{proof}

\begin{lemma}
\label{lem:vc-transfer}
Let $p=\Pp(G>1/\sigma)$ for $G\sim N(0,1)$. For every
$k\in\{0,\ldots,n\}$ and $\varepsilon\in(0,1)$,
\begin{equation}
\Pp\!\left(
\sum_{i=1}^n\1\{X_i\in\DIS(A_{n,r})\}=k,
\ \cD(\DIS(A_{n,r}))>\varepsilon
\right)
\le
\binom nk\Pi_{\cH}(n)e^{k/(2\sigma^2)}
\left(1-p+p\sqrt{1-\varepsilon}\right)^{n-k}\,.
\label{eq:vc-transfer-tail}
\end{equation}
\end{lemma}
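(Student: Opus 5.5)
The plan is to follow the proof of \Cref{lem:transfer} and change only the union bound. In the finite case the factor $N$ counts the possible leading hypotheses. Here I would count only the \emph{labelings} of $X_1,\ldots,X_n$ realized by $\cH$, of which there are at most $\Pi_{\cH}(n)$. Concretely, on the event in \eqref{eq:vc-transfer-tail} I would fix:
\begin{itemize}
\item the empirical disagreement set $S=\{i:X_i\in\DIS(A_{n,r})\}$, with $|S|=k$, which gives the factor $\binom nk$;
\item the common labeling $b\in\{0,1\}^{[n]\setminus S}$ that every $h\in A_{n,r}$ induces on the agreement coordinates, together with the labeling of the perturbed leader on $S$.
\end{itemize}
Together these form a single pattern in $\{(h(X_1),\ldots,h(X_n)):h\in\cH\}$, so the number of choices is at most $\Pi_{\cH}(n)$. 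It remains to show that, for each fixed $(S,\text{pattern})$, the probability of the event is at most $e^{k/(2\sigma^2)}(1-p+p\sqrt{1-\varepsilon})^{n-k}$.

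For a fixed choice, call $i\notin S$ \emph{strong} if $G_i$ exceeds $1/\sigma$ in the direction favoring $b_i$. Since losses lie in $[0,1]$, at a strong coordinate every hypothesis disagreeing with $b_i$ pays an extra penalty exceeding $\sigma G_i-1>0$ relative to agreeing ones.

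I would then build a surrogate near-leading set $B$ by deleting the strong coordinates from the objective. Deletion only removes penalties from hypotheses outside the agreeing family and shifts all agreeing hypotheses equally, so $A_{n,r}\subseteq B$ and $\DIS(A_{n,r})\subseteq\DIS(B)$. The surrogate $B$ depends on the remaining coordinates only.

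The point is that each strong coordinate is, conditionally, a fresh draw from $\cD$ that must land outside $\DIS(A_{n,r})\supseteq\DIS(B)$'s relevant part. That happens with probability at most $1-\varepsilon$ when $\cD(\DIS(A_{n,r}))>\varepsilon$. A coordinate is strong with probability $p$, so per agreement coordinate the contribution is $1-p+p(1-\varepsilon)$, or $1-p+p\sqrt{1-\varepsilon}$ after the step below.

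The tilt factor $e^{k/(2\sigma^2)}$ should come from a Gaussian change of measure on the $k$ coordinates of $S$. Membership of $i$ in $S$ forces the perturbed objective to be nearly tied at $X_i$, so the event genuinely depends on $G_S$. I would shift each $G_i$, $i\in S$, by $\pm1/\sigma$ toward the leader's label, which changes each objective by at most $1$ per coordinate and so can be absorbed into $r$. The second moment of the likelihood ratio is $\E[L^2]=e^{k/\sigma^2}$. Cauchy--Schwarz, $\Pp(E)\le\sqrt{\E[L^2]}\sqrt{\Pp'(E)}$, then produces both $e^{k/(2\sigma^2)}$ and the square root in $\sqrt{1-\varepsilon}$. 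I expect the Cauchy--Schwarz to be applied coordinatewise, via a product/Hölder bound, so that $\E\sqrt{\cdot}$ per coordinate gives exactly $1-p+p\sqrt{1-\varepsilon}$.

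The main obstacle is independence. The losses $\ell_j$ are adaptive and may depend on earlier $X_i$, including strong ones, so $B$ is not literally independent of the deleted contexts. I would handle this by revealing coordinates sequentially and treating each strong $X_i$ as a fresh sample against a set determined by a filtration. The substitute for independence is the supermartingale $\prod_i(\text{per-coordinate factor})^{-1}\1\{\cdots\}$.

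A VC-specific subtlety is that the set of labelings being counted must be fixed before the strong coordinates are selected. I would therefore take the union bound over patterns on all $n$ observed contexts, not just on the non-strong ones. This is why the bound carries $\Pi_{\cH}(n)$ rather than a growth function of a random subset. The rest of the proof is identical to that of \Cref{lem:transfer} with $N$ replaced by $\Pi_{\cH}(n)$.
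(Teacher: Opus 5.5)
There is a genuine gap, and it is in the one step that is specific to this lemma. You union bound over the leader's full pattern on $X_1,\ldots,X_n$. But the set of patterns that $\cH$ realizes on all $n$ contexts depends on the strong contexts $(X_i)_{i\in P}$ themselves, and so do the representative hypothesis attached to a pattern and the surrogate built from it. Conditionally on $(X_i)_{i\notin P}$ the surrogate is then not fixed, and the strong contexts can no longer serve as a fresh sample against it. Counting patterns on all $n$ points creates the circularity rather than curing it. A union bound over deterministic patterns would avoid the dependence, but it costs $2^n$.

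The paper does the reverse. For each $S$ (the paper's $B$) and each candidate strong set $P\subseteq[n]\setminus S$, it counts only labelings $u\in\mathcal U_P=\{(h(X_i))_{i\notin P}:h\in\cH\}$. This set is determined by $(X_i)_{i\notin P}$ alone and satisfies $|\mathcal U_P|\le\Pi_{\cH}(n)$; the paper then fixes a representative $h_u$ for each $u$. The idea you are missing is \Cref{lem:vc-representative}. Every disagreement with a leader $\widehat h\in A_{n,0}$ on $P$ costs a strictly positive amount, so any $h$ agreeing with $\widehat h$ off $P$ agrees with it on all $n$ contexts. Hence $h_u$ is itself a leader and produces the same strong set. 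The event that it produces exactly $P$ has conditional probability $p^{|P|}(1-p)^{n-k-|P|}$. Summing these weights over $P$ by the binomial theorem gives $(1-p+p\sqrt{1-\varepsilon})^{n-k}$, so counting labelings on a subset costs nothing.

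Two further steps also fail as written. First, your surrogate, obtained by deleting the strong coordinates from the objective, only satisfies $A_{n,r}\subseteq B$. That gives $\cD(\DIS(B))>\varepsilon$, but not $X_i\notin\DIS(B)$ for strong $i$, which is exactly what the holdout bound needs. The paper's $\cC_w$ (\Cref{lem:suboptimal-set-characterization}) additionally forces agreement with the leader on the weak coordinates $[n]\setminus(S\cup P)$. It keeps only the $w$-weighted disagreement sum over $S$, and positivity of the strong-coordinate terms then gives $\cC_W=A_{n,r}$ exactly.

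Second, that same choice is what handles adaptivity, so no filtration or supermartingale is needed. $\cC_w$ contains no losses; they enter only through $W\sim N(-\mu,\sigma^2I_k)$ with $\|\mu\|^2\le k$. Changing measure to a centered $W_0$ independent of everything, followed by one Cauchy--Schwarz conditional on $P$, gives $e^{k/(2\sigma^2)}(1-\varepsilon)^{|P|/2}$ (\Cref{lem:gaussian-holdout}). The shift is therefore the loss-dependent $\mu$, chosen to cancel the losses, not a fixed $\pm1/\sigma$ absorbed into $r$. Absorbing shifts of size up to $1$ on $k$ coordinates would inflate $r$ by up to $k$.
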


To prove \Cref{lem:vc-transfer}, we can no longer apply a union bound over a finite hypothesis class, and thus should not study a fixed hypothesis $h$ but rather a fixed labeling of the contexts, which will give the relation to the growth function. However, the held out contexts were previously chosen by the fixed hypothesis, so to avoid a circular argument we observe that agreeing with the leading hypothesis where its perturbations are large implies agreeing everywhere:

\begin{lemma}
\label{lem:vc-representative}
Fix $B\subseteq[n]$ and $\widehat h\in A_{n,0}$, and let
\(
P
=
\left\{
i\notin B:
(1-2\widehat h(X_i))G_i>\frac1\sigma
\right\}.
\)
If $h\in\cH$ satisfies
\(
h(X_i)=\widehat h(X_i)
\)
for every $i\notin P$, then
\(
h(X_i)=\widehat h(X_i)
\) for every $i\in[n]$.
\end{lemma}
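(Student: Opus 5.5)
The plan is a direct exchange argument: a hypothesis that agrees with $\widehat h$ off $P$ but disagrees somewhere on $P$ would have strictly smaller perturbed empirical loss than $\widehat h$. That contradicts $\widehat h\in A_{n,0}$, which says $\widehat h$ minimizes $\sum_{i=1}^n[\ell_i(h(X_i))-\sigma G_i h(X_i)]$ over $\cH$. The set $B$ plays no role beyond shrinking $P$, so the argument works for any fixed $B$.

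Concretely, fix $h\in\cH$ with $h(X_i)=\widehat h(X_i)$ for all $i\notin P$. Let $D=\{i\in P: h(X_i)\ne\widehat h(X_i)\}$, and suppose for contradiction that $D\neq\emptyset$. Write $F(g)=\sum_{i=1}^n[\ell_i(g(X_i))-\sigma G_i g(X_i)]$. Coordinates outside $D$ contribute equally to $F(h)$ and $F(\widehat h)$, so
\[
F(h)-F(\widehat h)=\sum_{i\in D}\Bigl[\ell_i(h(X_i))-\ell_i(\widehat h(X_i))-\sigma G_i\bigl(h(X_i)-\widehat h(X_i)\bigr)\Bigr].
\]
For $i\in D$ the labels are binary and differ, so $h(X_i)=1-\widehat h(X_i)$, hence $h(X_i)-\widehat h(X_i)=1-2\widehat h(X_i)$. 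Each summand is therefore $\bigl(\ell_i(1-\widehat h(X_i))-\ell_i(\widehat h(X_i))\bigr)-\sigma(1-2\widehat h(X_i))G_i$.

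Next, bound each summand. The first part is at most $1$ because $\ell_i\in[0,1]^2$. Since $i\in P$, we have $(1-2\widehat h(X_i))G_i>1/\sigma$, so the second part is strictly less than $-1$. Each summand is thus strictly negative, and since $D$ is nonempty, $F(h)<F(\widehat h)=\min_{g\in\cH}F(g)$. This is a contradiction, so $D=\emptyset$ and $h$ agrees with $\widehat h$ on all of $[n]$.

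There is no real obstacle here. The only points needing care are the sign bookkeeping for $h-\widehat h=1-2\widehat h$ on disagreeing coordinates, and noting that the threshold $1/\sigma$ in the definition of $P$ is exactly what beats the loss range of $1$. Strictness comes from the strict inequality in $P$, so no tie-breaking issue arises.
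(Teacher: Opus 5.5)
Your proof is correct and follows the paper's argument: both write the perturbed-loss difference between $h$ and $\widehat h$ as a sum over disagreeing coordinates in $P$. In that sum, each nonzero term $(\sigma G_i-\ell_i(1)+\ell_i(0))(1-2\widehat h(X_i))$ is strictly positive, because the threshold $1/\sigma$ beats the loss range $1$, and this contradicts the optimality of $\widehat h\in A_{n,0}$. The only difference is phrasing: you argue by contradiction, while the paper uses the sandwich $0\ge\sum\ge0$.
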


\begin{proof}
Since $h(X_i)=\widehat h(X_i)$ for every $i\notin P$,
\begin{align*}
0
&\ge
\sum_{i=1}^n
\left[
\ell_i(\widehat h(X_i))-\sigma G_i\widehat h(X_i)
-\ell_i(h(X_i))+\sigma G_i h(X_i)
\right]
\\
&=
\sum_{i\in P}
\bigl(\sigma G_i-\ell_i(1)+\ell_i(0)\bigr)
(1-2\widehat h(X_i))
\1\{h(X_i)\ne\widehat h(X_i)\}
\\
&\ge0\,.
\end{align*}
Each nonzero summand is strictly positive by the definition of $P$, so
$h(X_i)=\widehat h(X_i)$ for every $i\in P$.
\end{proof}

\begin{proof}[Proof of \Cref{lem:vc-transfer}]
For every $B\subseteq[n]$, $|B|=k$, and
$P\subseteq[n]\setminus B$, let
\(
\mathcal U_P
=
\left\{
\bigl(h(X_i)\bigr)_{i\notin P}:h\in\cH
\right\},
\)
so $|\mathcal U_P|\le\Pi_{\cH}(n)$, and, for every
$u\in\mathcal U_P$, choose some $h_u\in\cH$ with
\(
(h_u(X_i))_{i\notin P}=u
\).
For $u\in\mathcal U_P$, define the event
\[
\mathcal F_{B,P,u}
=
\left\{
P=
\left\{
i\notin B:
(1-2h_u(X_i))G_i>\frac1\sigma
\right\}
\right\}.
\]
On the event
\(
\sum_{i=1}^n\1\{X_i\in\DIS(A_{n,r})\}=k,
\)
write
\(
B^\star=\{i:X_i\in\DIS(A_{n,r})\},
\)
choose $\widehat h\in A_{n,0}$, and write
\(
P^\star
=
\left\{
i\notin B^\star:
(1-2\widehat h(X_i))G_i>\frac1\sigma
\right\}.
\)
For
\(
u=(\widehat h(X_i))_{i\notin P^\star}\in\mathcal U_{P^\star}
\),
\Cref{lem:vc-representative} gives
\(
h_u(X_i)=\widehat h(X_i)
\)
for every $i\in[n]$. Hence $h_u\in A_{n,0}$ and
$\mathcal F_{B^\star,P^\star,u}$ holds. Therefore,
\begin{align*}
&\Pp\left(
\sum_{i=1}^n\1\{X_i\in\DIS(A_{n,r})\}=k,
\ \cD(\DIS(A_{n,r}))>\varepsilon
\right)
\\
\text{(union bound)}\qquad
&\le
\sum_{\substack{B\subseteq[n]\\|B|=k}}
\sum_{P\subseteq[n]\setminus B}
\E
\sum_{u\in\mathcal U_P}
\Pp\left(
\begin{gathered}
h_u\in A_{n,0},\
B=\{i:X_i\in\DIS(A_{n,r})\},\\
\mathcal F_{B,P,u},\
\cD(\DIS(A_{n,r}))>\varepsilon
\end{gathered}
\,\middle|\,
(X_i)_{i\notin P}
\right)
\\
\text{(\Cref{lem:suboptimal-set-characterization})}\qquad
&\le
\sum_{\substack{B\subseteq[n]\\|B|=k}}
\sum_{P\subseteq[n]\setminus B}
\E
\sum_{u\in\mathcal U_P}
\Pp\left(
\begin{gathered}
\mathcal F_{B,P,u},\
\cD(\DIS(\cC_W))>\varepsilon,\\
X_i\notin\DIS(\cC_W)\quad\forall i\in P
\end{gathered}
\,\middle|\,
(X_i)_{i\notin P}
\right),
\end{align*}

For fixed $B,P,u$, by symmetry and independence of the Gaussian
perturbations, we have that
\(
\Pp\left(\mathcal F_{B,P,u}\mid (X_i)_{i\notin P}\right)
=
p^{|P|}(1-p)^{n-k-|P|}.
\)
Hence, applying
\Cref{lem:gaussian-holdout} with $h=h_u$ yields
\begin{align*}
&\Pp\left(
\begin{gathered}
\mathcal F_{B,P,u},\
\cD(\DIS(\cC_W))>\varepsilon,\\
X_i\notin\DIS(\cC_W)\quad\forall i\in P
\end{gathered}
\,\middle|\,
(X_i)_{i\notin P}
\right)
\\
&\qquad=
p^{|P|}(1-p)^{n-k-|P|}
\Pp\left(
\begin{gathered}
\cD(\DIS(\cC_W))>\varepsilon,\\
X_i\notin\DIS(\cC_W)\quad\forall i\in P
\end{gathered}
\,\middle|\,
\mathcal F_{B,P,u},(X_i)_{i\notin P}
\right)
\\
&\qquad\le
p^{|P|}(1-p)^{n-k-|P|}
e^{k/(2\sigma^2)}(1-\varepsilon)^{|P|/2}\,.
\end{align*}

Consequently,
\begin{align*}
&\Pp\left(
\sum_{i=1}^n\1\{X_i\in\DIS(A_{n,r})\}=k,
\ \cD(\DIS(A_{n,r}))>\varepsilon
\right)
\\
&\qquad\le
\sum_{\substack{B\subseteq[n]\\|B|=k}}
\sum_{P\subseteq[n]\setminus B}
\E\sum_{u\in\mathcal U_P}
p^{|P|}(1-p)^{n-k-|P|}
e^{k/(2\sigma^2)}(1-\varepsilon)^{|P|/2}
\\
&\qquad\le
\sum_{\substack{B\subseteq[n]\\|B|=k}}
\sum_{P\subseteq[n]\setminus B}
\Pi_{\cH}(n)
p^{|P|}(1-p)^{n-k-|P|}
e^{k/(2\sigma^2)}(1-\varepsilon)^{|P|/2}
\\
&\qquad=
\binom nk
\Pi_{\cH}(n)e^{k/(2\sigma^2)}
\sum_{j=0}^{n-k}
\binom{n-k}{j}
(1-p)^{n-k-j}
\left(p\sqrt{1-\varepsilon}\right)^j
\\
&\qquad=
\binom nk
\Pi_{\cH}(n)e^{k/(2\sigma^2)}
\left(1-p+p\sqrt{1-\varepsilon}\right)^{n-k}\,.
\end{align*}
where the last equality is the binomial theorem. 
\end{proof}

\begin{proof}[Proof of \Cref{lem:vc-population}]
The proof of \Cref{lem:population} with
\Cref{lem:vc-empirical,lem:vc-transfer} in place of
\Cref{lem:empirical,lem:transfer} gives
\begin{align*}
\E\bigl[\cD(\DIS(A_{n,r}))\bigr]
&\le
C\left(
\frac{\log\Pi_{\cH}(n)+\log n}{n}
+
\frac{\log n}{n}
\E\sum_{i=1}^n\1\{X_i\in\DIS(A_{n,r})\}
\right)
\\
\text{(\Cref{lem:vc-empirical})}\qquad
&\le
C\left(
\frac{\log\Pi_{\cH}(n)+\log n}{n}
+
\log n\sqrt{\frac{\log\Pi_{\cH}(n)}{n}}
\exp\left(\frac{r^2}{2\sigma^2}\right)
\right)
\\
&\le
C(1+\log n)
\sqrt{\frac{\log\Pi_{\cH}(n)}{n}}
\exp\left(\frac{r^2}{2\sigma^2}\right)\,,
\end{align*}
where the last inequality follows from
$\log 2\le\log\Pi_{\cH}(n)\le n\log 2$ and increasing the constant $C$.
\end{proof}

\begin{proof}[Proof of \Cref{thm:vc-extension}]
As in the proof of \Cref{thm:main}, \Cref{lem:regret-disagreement} gives
\begin{align*}
\E R_T
&\le
2\sigma
+
2\sum_{t=2}^T
\int_0^\infty
\Pp(G\ge s/\sigma)\,
\E\bigl[\cD(\DIS(A_{t-1,s+1}))\bigr]\,ds
\\
\text{(\Cref{lem:vc-population})}\qquad
&\le
2\sigma
+
2\sum_{n=1}^{T-1}
\int_0^\infty
\Pp(G\ge s/\sigma)
\min\left\{
1,
C(1+\log n)
\sqrt{\frac{\log\Pi_{\cH}(n)}{n}}
\exp\left(\frac{(s+1)^2}{2\sigma^2}\right)
\right\}ds
\\
&\le
2\sigma
+
16C\log T\sqrt{\log\Pi_{\cH}(T)}
\sum_{n=1}^{T-1}\frac{1+\log n}{\sqrt n}
+
\frac{2(T-1)}{T^2}
\\
&=
O\!\left(
\sqrt{T\log\Pi_{\cH}(T)}\,\log^2T
\right)
\\
\text{(\Cref{lem:sauer})}\qquad
&=
O\!\left(
\sqrt{
T\VC(\mathcal H)
\log\left(1+\frac{T}{\VC(\mathcal H)}\right)
}\,\log^2T
\right)\,.
\end{align*}
\end{proof}

%% file: regret-disagreement-lemma-proof.tex
\subsection{Stability via disagreement}

For a score function $S:\cH\to\R$ and $r\ge0$, write
$$A_r=\{h\in\mathcal H:S(h)\ge\max_{g\in\mathcal{H}} S(g)-r\}\,.$$ We first record the following simple fact:
\begin{fact}\label{fact:binary-disagreement}
For any $S:\cH\to\R$, $r\ge0$, and $A_r$ as above, assuming $\{h(X):h\in\mathcal H\}=\{0,1\}$ we have
\[
X\in\DIS(A_r)
\quad\Longleftrightarrow\quad
\left|
\max_{h\in\mathcal H:h(X)=1}S(h)
-
\max_{h\in\mathcal H:h(X)=0}S(h)
\right|\le r\,.
\]
\end{fact}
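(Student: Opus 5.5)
We prove the two implications separately. Write $M_1=\max_{h:h(X)=1}S(h)$ and $M_0=\max_{h:h(X)=0}S(h)$. Both sets are nonempty by the assumption $\{h(X):h\in\cH\}=\{0,1\}$, so both maxima are well defined. We take the maxima to be attained, which is automatic when $\cH$ is finite, or more generally when $S$ depends on $h$ only through finitely many labels, as in every use in the paper. The global maximum is then $M^\star=\max_{g\in\cH}S(g)=\max\{M_0,M_1\}$.

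\emph{($\Leftarrow$).} Assume $|M_1-M_0|\le r$. Let $h_1$ attain $M_1$ and $h_0$ attain $M_0$. Then
\[
S(h_1)=M_1\ge \max\{M_0,M_1\}-r=M^\star-r,
\]
and symmetrically $S(h_0)\ge M^\star-r$. Hence $h_0,h_1\in A_r$. Since $h_1(X)=1\neq 0=h_0(X)$, the point $X$ lies in $\DIS(A_r)$.

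\emph{($\Rightarrow$).} Assume $X\in\DIS(A_r)$. Then there exist $h,g\in A_r$ with $h(X)\ne g(X)$. Relabel so that $h(X)=1$ and $g(X)=0$. Membership in $A_r$ gives $M_1\ge S(h)\ge M^\star-r$ and $M_0\ge S(g)\ge M^\star-r$. Both $M_0$ and $M_1$ are at most $M^\star$, so both lie in the interval $[M^\star-r,M^\star]$. Two numbers in an interval of length $r$ differ by at most $r$, so $|M_1-M_0|\le r$.

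The argument is elementary; the only point needing care is attainment of the maxima. If $\cH$ is infinite and $S$ is arbitrary, the sup over $\{h:h(X)=1\}$ may fail to be attained, and the forward direction's witnesses $h_0,h_1$ need not exist. In the paper's applications this does not arise. There $S(h)=-\sum_i[\ell_i(h(X_i))-\sigma G_i h(X_i)]$ depends on $h$ only through the finitely many labels $(h(X_1),\ldots,h(X_n))$. Both restricted maxima therefore range over a finite set of labelings and are attained.
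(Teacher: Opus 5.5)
Your argument is correct and is the direct verification the paper leaves implicit: it records this as a ``simple fact'' without proof, and your observation is also valid that the maxima are attained in every application because $S$ depends on $h$ only through finitely many labels. One small slip in your closing remark: it is the ($\Leftarrow$) direction that needs the maximizers $h_0,h_1$ to exist, whereas ($\Rightarrow$) goes through verbatim with suprema.
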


We will use the following binary stability bound.

\begin{lemma}\label{lem:binary-stability-disagreement}
Let $S:\cH\to\R$, $X\in\mathcal X$, $\ell:\{0,1\}\to[0,1]$,
$\sigma\ge\sqrt{2\pi}$, and $G\sim N(0,1)$. For
$h^\star\in\argmax_{h\in\mathcal H}S(h)$, we have that
\begin{align*}
\ell(h^\star(X))
+\E_G\max_{h\in\mathcal H}
\{S(h)-\ell(h(X))+\sigma Gh(X)\}
-S(h^\star)
\le
2\int_0^\infty
\Pp(G\ge s/\sigma)\1\{X\in\DIS(A_{s+1})\}ds\,.
\end{align*}
\end{lemma}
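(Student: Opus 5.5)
The plan is to reduce everything to a one-dimensional computation in the gap between the best scores of the two labels at $X$. Write $M_b=\max_{h\in\cH:h(X)=b}S(h)$ for $b\in\{0,1\}$ and $c=\ell(1)-\ell(0)\in[-1,1]$.

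\emph{Degenerate case.} If $\{h(X):h\in\cH\}$ is a single label $a$, then every hypothesis predicts $a$ at $X$. The inner maximum is then $\max_h S(h)-\ell(a)+\sigma Ga$, so the left-hand side equals $\sigma a\,\E G=0$. The claim holds because the right-hand side is nonnegative.

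\emph{Reduction to a positive part.} Otherwise both labels are realized, and
\[
\max_{h\in\cH}\{S(h)-\ell(h(X))+\sigma Gh(X)\}=\max\{M_0-\ell(0),\,M_1-\ell(1)+\sigma G\}.
\]
\begin{itemize}
\item If $h^\star(X)=1$, then $M_1\ge M_0$. Set $\Delta=M_1-M_0\ge0$. The left-hand side becomes $\E\max\{c-\Delta,\sigma G\}$. Since $\max\{a,b\}=b+(a-b)_+$, $\E G=0$, and $G$ is symmetric, this equals $\E(\sigma G-u)_+$ with $u=\Delta-c$.
\item If $h^\star(X)=0$, set $\Delta=M_0-M_1\ge0$. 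The left-hand side becomes $\E(\sigma G-(\Delta+c))_+$, which has the same form with $u=\Delta+c$.
\end{itemize}
In both cases the left-hand side is $\E(\sigma G-u)_+$ with $u\ge\Delta-1$ and $u\ge -1$. On the other side, \Cref{fact:binary-disagreement} with $r=s+1$ gives $\1\{X\in\DIS(A_{s+1})\}=\1\{\Delta\le s+1\}$. So the target is
\[
\E(\sigma G-u)_+\le 2\int_0^\infty \Pp(G\ge s/\sigma)\,\1\{s\ge \Delta-1\}\,ds .
\]

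\emph{Case $u\ge0$.} Here $\E(\sigma G-u)_+=\int_u^\infty\Pp(\sigma G\ge s)\,ds$. Because $u\ge\max\{0,\Delta-1\}$, the range $s\ge u$ lies inside the region where the indicator equals one. The bound therefore holds even with constant $1$.

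\emph{Case $u<0$.} This forces $\Delta<1$, since $u\ge\Delta-1$. Hence the indicator is identically one on $s\ge0$, and the right-hand side equals $2\sigma\E G_+=2\sigma/\sqrt{2\pi}$. On the left, $(\sigma G-u)_+\le |u|+(\sigma G)_+$ and $|u|\le1$, so $\E(\sigma G-u)_+\le 1+\sigma/\sqrt{2\pi}$. The required inequality $1+\sigma/\sqrt{2\pi}\le 2\sigma/\sqrt{2\pi}$ is exactly $\sigma\ge\sqrt{2\pi}$.

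The main obstacle is the bookkeeping in the reduction step. One has to get the sign conventions right in both cases $h^\star(X)\in\{0,1\}$, and confirm that the loss shift $c$ moves the threshold by at most $1$. That bound on the shift is precisely why the lemma uses $A_{s+1}$ rather than $A_s$, and why a lower bound on $\sigma$ is needed to absorb the additive $1$ in the case $u<0$.
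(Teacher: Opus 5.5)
Your proof is correct and follows essentially the paper's route. Both arguments use the two label-wise maxima and Gaussian symmetry to reduce the left-hand side to $\E(\sigma G-u)_+$ with $u\ge\Delta-1$ (the paper writes $\gamma$ for your $\Delta$), and both then use $\sigma\ge\sqrt{2\pi}$ to absorb an additive slack of at most $1$ into a second copy of $\int_0^\infty\Pp(G\ge s/\sigma)\,ds$. The only difference is cosmetic: you split on the sign of $u$, whereas the paper first replaces $u$ by $\gamma-1$ and then splits off the term $\max\{1-\gamma,0\}$.
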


\begin{proof}[Proof of \Cref{lem:regret-disagreement}]
Let
\(
L_t(h)=\sum_{s\le t}\ell_s(h(X_s))
\)
and, for $F:\cH\to\R$, define
\[
\Phi_t(F)
=
\E_{G_1,\dots,G_t\sim\mathcal{N}(0,1)}\max_{h\in\cH}
\left\{-F(h)+\sigma\sum_{s\le t}G_sh(X_s)\right\}.
\]
Since $\Phi_T(L_T)\ge-\min_{h\in\mathcal{H}}L_T(h)$ and $\Phi_0(L_0)=0$,
\begin{align*}
\E R_T
\le
\E\sum_{t=1}^T
\bigl[\ell_t(a_t)+\Phi_t(L_t)-\Phi_{t-1}(L_{t-1})\bigr].
\end{align*}

For $t=1$,
\begin{align*}
\ell_1(a_1)+\Phi_1(L_1)-\Phi_0(L_0)
\le
1+
\sigma\E_{G\sim\mathcal N(0,1)}|G|
=
1+\sigma\sqrt{\frac{2}{\pi}}
\le 2\sigma\,.
\end{align*}

Fix $t\ge2$, let $G_{t,1},\ldots,G_{t,t-1}\sim N(0,1)$ be the perturbations in \Cref{alg:ftpl}, and draw, for the analysis, a fresh $G\sim N(0,1)$. Denote
\(
S(h)=-L_{t-1}(h)+\sigma\sum_{i<t}G_{t,i}h(X_i)
\),
then \Cref{alg:ftpl} chooses
$h_t\in\argmax_{h\in\mathcal H}S(h)$ and we have
\begin{align*}
&\Phi_t(L_t)-\Phi_{t-1}(L_{t-1})
+\E_{G_{t,1},\ldots,G_{t,t-1}}\ell_t(a_t)
\\
&=
\E_{G_{t,1},\ldots,G_{t,t-1}}\ell_t(h_t(X_t))
\\
&\quad+
\E_{G_{t,1},\ldots,G_{t,t-1},G}
\max_{h\in\mathcal H}
\left\{
-L_t(h)
+\sigma\sum_{i<t}G_{t,i}h(X_i)
+\sigma Gh(X_t)
\right\}
\\
&\quad-
\E_{G_{t,1},\ldots,G_{t,t-1}}
\max_{h\in\mathcal H}
\left\{
-L_{t-1}(h)
+\sigma\sum_{i<t}G_{t,i}h(X_i)
\right\}
\\
&=
\E_{G_{t,1},\ldots,G_{t,t-1}}
\left[
\ell_t(h_t(X_t))
+\E_G\max_{h\in\mathcal H}
\{S(h)-\ell_t(h(X_t))+\sigma Gh(X_t)\}
-\max_{h\in\mathcal H}S(h)
\right]
\\
\text{(\Cref{lem:binary-stability-disagreement})}\qquad
&\le
2\int_0^\infty
\Pp(G\ge s/\sigma)\,
\E_{G_{t,1},\ldots,G_{t,t-1}}
\1\{X_t\in\DIS(A_{t-1,s+1})\}
\,ds\,.
\end{align*}
Since $A_{t-1,r}$ is independent of $X_t\sim\cD$,
\[
\E\,
\E_{G_{t,1},\ldots,G_{t,t-1}}
\1\{X_t\in\DIS(A_{t-1,r})\}
=
\E\bigl[\cD(\DIS(A_{t-1,r}))\bigr]\,.
\]
Summing over $t$ proves \Cref{lem:regret-disagreement}.
\end{proof}

\begin{proof}[Proof of \Cref{lem:binary-stability-disagreement}]
If for some $b\in\{0,1\}$, we have $h(X)=b$ for every $h\in\mathcal H$, then the left-hand side is zero and $X\notin\DIS(A_{s+1})$ for every $s\ge0$.
Otherwise, let $a=h^\star(X)$ and set
\[
\gamma
=
S(h^\star)-\max_{h\in\mathcal H:h(X)=1-a}S(h)\ge0.
\]
Then
\begin{align*}
&\ell(a)
+\E_G\max_{h\in\mathcal H}
\{S(h)-\ell(h(X))+\sigma Gh(X)\}
-\max_{h\in\mathcal H}S(h)
\\
&=
\E_G\max\{\sigma Ga,
-\gamma+\ell(a)-\ell(1-a)+\sigma G(1-a)\}
\\
&=
\E_G\left[
\sigma Ga+
\max\{\sigma G(1-2a)-\gamma+\ell(a)-\ell(1-a),0\}
\right]
\\
\text{(Gaussian symmetry)}\qquad
&=
\E_G\max\{\sigma G-\gamma+\ell(a)-\ell(1-a),0\}
\\
&\le
\E_G\max\{\sigma G-\gamma+1,0\}
\\
&\le
\E_G\max\{\sigma G-\max\{\gamma-1,0\},0\}
+\max\{1-\gamma,0\}
\\
&=
\int_{0}^\infty
\Pp(\max\{\sigma G-\max\{\gamma-1,0\},0\}\geq s)\,ds
+\max\{1-\gamma,0\}
\\&
=
\int_{\max\{\gamma-1,0\}}^\infty
\Pp(G\ge s/\sigma)\,ds+\max\{1-\gamma,0\}
\\
&\leq
2\int_{\max\{\gamma-1,0\}}^\infty
\Pp(G\ge s/\sigma)\,ds\,,
\end{align*}
where the last inequality follows from $\sigma\ge\sqrt{2\pi}$ and
\[
\max\{1-\gamma,0\}
\le
\1\{\gamma\le1\}\frac{\sigma}{\sqrt{2\pi}}
=
\1\{\gamma\le1\}\int_0^\infty \Pp(G\ge s/\sigma)\,ds
\le
\int_{\max\{\gamma-1,0\}}^\infty \Pp(G\ge s/\sigma)\,ds.
\]
Therefore, by \Cref{fact:binary-disagreement},
\begin{align*}
&\ell(h^\star(X))
+\E_G\max_{h\in\mathcal H}
\{S(h)-\ell(h(X))+\sigma Gh(X)\}
-\max_{h\in\mathcal H}S(h)
\\
&\qquad\le
2\int_{\max\{\gamma-1,0\}}^\infty
\Pp(G\ge s/\sigma)\,ds
\\
&\qquad=
2\int_0^\infty
\Pp(G\ge s/\sigma)\1\{\gamma\le s+1\}\,ds
\\
&\qquad=
2\int_0^\infty
\Pp(G\ge s/\sigma)
\1\{X\in\DIS(A_{s+1})\}\,ds\,.
\end{align*}
\end{proof}

%% file: empirical_lemma_proof.tex
\begin{proof}[Proof of \Cref{lem:empirical}]
Let $I$ index the observed contexts at which both labels are realizable, namely
\(I=\{i\in[n]:\{h(X_i):h\in\cH\}=\{0,1\}\}\).
For each $i\in I$, condition on
$G_1,\dots,G_{i-1},G_{i+1},\dots,G_n$ and for $b\in\{0,1\}$ write
\[
M_b
=
\max_{h\in\cH:h(X_i)=b}
\left\{
-\sum_{j=1}^n \ell_j(h(X_j))
+
\sigma\sum_{j\ne i}G_jh(X_j)
\right\}\,,
\]
and denote $\theta_i=(M_0-M_1)/\sigma$. Then
\begin{align*}
\1\{X_i\in\DIS(A_{n,r})\}
&=
\1\left\{
\max\{M_0,M_1+\sigma G_i\}
-
\min\{M_0,M_1+\sigma G_i\}
\le r
\right\}
\\
&=
\1\{|M_1+\sigma G_i-M_0|\le r\}
\\
&=
\1\{|G_i-\theta_i|\le r/\sigma\}\,.
\end{align*}

Writing $\varphi(g)=(2\pi)^{-1/2}e^{-g^2/2}$ and
\[
\widehat h_{G_1,\dots,G_n}
\in
\argmax_{h\in\cH}
\left\{
-\sum_{j=1}^n \ell_j(h(X_j))
+
\sigma\sum_{j=1}^nG_jh(X_j)
\right\}\,,
\]
we have that
\begin{align*}
\E_{G_1,\ldots,G_n} \sum_{i=1}^n\1\{X_i\in\DIS(A_{n,r})\}
&=
\sum_{i\in I}
\E_{G_1,\ldots,G_n}\1\{X_i\in\DIS(A_{n,r})\}
\\
&=
\sum_{i\in I}
\E_{G_1,\ldots,G_n}
\Pp_{G_i}\left(
X_i\in\DIS(A_{n,r})
\,\middle|\,
G_1,\dots,G_{i-1},G_{i+1},\dots,G_n
\right)
\\
&=
\sum_{i\in I}
\E_{G_1,\ldots,G_n}
\int_{\theta_i-r/\sigma}^{\theta_i+r/\sigma}
\varphi(g)\,dg
\end{align*}
and a change of variables gives
\begin{align*}
\E_{G_1,\ldots,G_n} \sum_{i=1}^n\1\{X_i\in\DIS(A_{n,r})\}
&=
\sum_{i\in I}
\E_{G_1,\ldots,G_n}\left[
\int_{\theta_i-r/\sigma}^{\infty}\varphi(g)\,dg
-
\int_{\theta_i+r/\sigma}^{\infty}\varphi(g)\,dg
\right]
\\
&=
\sum_{i\in I}
\E_{G_1,\ldots,G_n}
\int_{\theta_i}^{\infty}
\left[
\varphi(g-r/\sigma)-\varphi(g+r/\sigma)
\right]\,dg
\\
&=
2e^{-r^2/(2\sigma^2)}
\sum_{i\in I}
\E_{G_1,\ldots,G_n}
\int_{\theta_i}^\infty
\sinh\left(\frac r\sigma g\right)\varphi(g)\,dg
\\
&=
2e^{-r^2/(2\sigma^2)}
\sum_{i\in I}
\E_{G_1,\ldots,G_n}\left[
\1\{G_i\ge\theta_i\}
\sinh\left(\frac r\sigma G_i\right)
\right]\,.
\end{align*}
Now, observe that
\begin{align*}
\widehat h_{G_1,\dots,G_n}(X_i)
=
\1\{M_1+\sigma G_i\ge M_0\}
=
\1\left\{G_i\ge\frac{M_0-M_1}{\sigma}\right\}
=
\1\{G_i\ge\theta_i\}\,,
\end{align*}
and let $\xi_1,\ldots,\xi_n\sim\operatorname{Unif}\{-1,+1\}$. Since $\sinh(\cdot)$ is odd, we obtain
\begin{align*}
\E_{G_1,\ldots,G_n}\sum_{i=1}^n\1\{X_i\in\DIS(A_{n,r})\}
&=
2e^{-r^2/(2\sigma^2)}
\E_{G_1,\ldots,G_n}
\sum_{i\in I}
\sinh\left(\frac r\sigma G_i\right)
\widehat h_{G_1,\dots,G_n}(X_i)
\\
&\le
2e^{-r^2/(2\sigma^2)}
\E_{G_1,\ldots,G_n}
\max_{h\in\cH}
\sum_{i\in I}
\sinh\left(\frac r\sigma G_i\right)
h(X_i)
\\
&=
2e^{-r^2/(2\sigma^2)}
\E_{G_1,\ldots,G_n}
\E_{\xi_1,\ldots,\xi_n}
\max_{h\in\cH}
\sum_{i\in I}
\xi_i
\sinh\left(\frac r\sigma |G_i|\right)
h(X_i)
\\
\text{(\Cref{lem:massart})}\qquad
&\le
2e^{-r^2/(2\sigma^2)}
\E_{G_1,\ldots,G_n}
\sqrt{
2\log N
\sum_{i\in I}
\sinh^2\left(\frac r\sigma |G_i|\right)
}
\\
\text{(Jensen)}\qquad
&\le
2e^{-r^2/(2\sigma^2)}
\sqrt{
2n\log N\,
\E_G\sinh^2\left(\frac r\sigma G\right)
}
\\
&
=
2e^{-r^2/(2\sigma^2)}
\sqrt{
n\log N\,
\E_G\left[\cosh\left(\frac{2r}{\sigma}G\right)-1\right]
}
\\
&
=
2e^{-r^2/(2\sigma^2)}
\sqrt{
\frac{1}{2}n\log N\,
\left(\E_Ge^{\frac{2r}{\sigma}G}+\E_Ge^{-\frac{2r}{\sigma}G}-2\right)
}
\\
\text{(Gaussian MGF)}\qquad
&=
2e^{-r^2/(2\sigma^2)}
\sqrt{
n\log N
\left(e^{2r^2/\sigma^2}-1\right)
}
\\
&\le
2\sqrt{n\log N}
\exp\left(\frac{r^2}{2\sigma^2}\right)\,.
\end{align*}
\end{proof}

%% file: transfer_lemma_proof.tex
To prove \Cref{lem:transfer}, we use the following two technical results. \Cref{lem:suboptimal-set-characterization} proposes a seemingly more complicated characterization of $A_{n,r}$, but exposes a useful partition over contexts. \Cref{lem:gaussian-holdout} uses a change-of-measure argument and the partitioned context set to bound the volume of contexts yielding disagreement.

\begin{lemma}
\label{lem:suboptimal-set-characterization}
Fix $B=\{b_1<\cdots<b_k\}\subseteq[n]$ and $h\in\cH$. Define
\(
P
=
\left\{i\notin B:(1-2h(X_i))G_i>\frac1\sigma\right\},
\)
and, for $w\in\R^k$,
\[
\cC_w
=
\left\{
g\in\cH:
g(X_i)=h(X_i)\ \forall i\in[n]\setminus(B\cup P),
\quad
\sum_{j=1}^k w_j\1\{g(X_{b_j})\ne h(X_{b_j})\}\ge-r
\right\}.
\]
Finally, let
\(
W_j
=
\bigl(\sigma G_{b_j}-\ell_{b_j}(1)+\ell_{b_j}(0)\bigr)
(1-2h(X_{b_j}))
\)
for $j\in[k]$. If $h\in A_{n,0}$ and it holds that 
$B=\{i:X_i\in\DIS(A_{n,r})\}$, then we have
\(
A_{n,r}=\cC_W.
\)
\end{lemma}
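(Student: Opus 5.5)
The plan is to rewrite membership in $A_{n,r}$ as a linear condition on the coordinates where a hypothesis disagrees with the leader $h$, and then check both inclusions coordinate by coordinate.

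\textbf{Step 1: the key identity.} Write
\[
F(g)=\sum_{i=1}^n\bigl[\ell_i(g(X_i))-\sigma G_ig(X_i)\bigr],
\qquad
D_i=\bigl(\sigma G_i-\ell_i(1)+\ell_i(0)\bigr)(1-2h(X_i)),
\]
so that $W_j=D_{b_j}$. I will show that for every $g\in\cH$,
\[
F(g)-F(h)=-\sum_{i=1}^n D_i\,\1\{g(X_i)\ne h(X_i)\}.
\]
This is checked coordinatewise with two cases. If $h(X_i)=0$ and $g(X_i)=1$, the contribution is $\ell_i(1)-\ell_i(0)-\sigma G_i=-D_i$. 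If $h(X_i)=1$ and $g(X_i)=0$, the contribution is $\ell_i(0)-\ell_i(1)+\sigma G_i$, which again equals $-D_i$ because $1-2h(X_i)=-1$.

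Since $h\in A_{n,0}$, we have $F(h)=\min_{g}F(g)$. Hence
\[
g\in A_{n,r}\iff \sum_{i=1}^n D_i\,\1\{g(X_i)\ne h(X_i)\}\ge -r.
\]

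\textbf{Step 2: $A_{n,r}\subseteq\cC_W$.} Take $g\in A_{n,r}$. Both $g$ and $h$ lie in $A_{n,r}$, so at any $X_i\notin\DIS(A_{n,r})$ they agree. By the hypothesis $B=\{i:X_i\in\DIS(A_{n,r})\}$, this means $g(X_i)=h(X_i)$ for every $i\notin B$, and in particular for every $i\notin B\cup P$. The sum in Step 1 therefore only involves indices in $B$. It equals $\sum_j W_j\1\{g(X_{b_j})\ne h(X_{b_j})\}$, which is at least $-r$. Hence $g\in\cC_W$.

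\textbf{Step 3: $\cC_W\subseteq A_{n,r}$.} Take $g\in\cC_W$, so $g$ agrees with $h$ off $B\cup P$. The sum in Step 1 then splits into a part over $B$, which is at least $-r$ by the definition of $\cC_W$, and a part over $P$. I claim every $D_i$ with $i\in P$ is strictly positive. By the definition of $P$, $\sigma G_i(1-2h(X_i))>1$. Since $\ell_i\in[0,1]$, we have $(\ell_i(0)-\ell_i(1))(1-2h(X_i))\ge-1$. Adding the two gives $D_i>0$. So the part over $P$ is nonnegative, the total is at least $-r$, and $g\in A_{n,r}$.

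The only delicate point is the sign bookkeeping in the identity of Step 1, together with the observation in Step 3 that the threshold $1/\sigma$ in the definition of $P$ is exactly what absorbs a loss difference of magnitude at most $1$. Everything else is direct.
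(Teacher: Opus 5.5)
Your proposal is correct and follows essentially the same argument as the paper. Both proofs rest on the identity $F(h)-F(g)=\sum_i D_i\1\{g(X_i)\ne h(X_i)\}$. The inclusion $A_{n,r}\subseteq\cC_W$ uses agreement with $h$ off $B$, which holds because $h\in A_{n,0}\subseteq A_{n,r}$. The reverse inclusion uses strict positivity of $D_i$ for $i\in P$. You also make explicit two points the paper leaves implicit: the two-case sign check, and why the threshold $1/\sigma$ absorbs a loss difference of magnitude at most $1$.
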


\begin{proof}
Every $g\in A_{n,r}$ agrees with $h$ outside $B$, and
\begin{align*}
&\sum_{i=1}^n
\left(
\ell_i(h(X_i))-\sigma G_i h(X_i)
-\ell_i(g(X_i))+\sigma G_i g(X_i)
\right)
\\
&=
\sum_{j=1}^k
\bigl(\sigma G_{b_j}-\ell_{b_j}(1)+\ell_{b_j}(0)\bigr)
(1-2h(X_{b_j}))
\1\{g(X_{b_j})\ne h(X_{b_j})\}
\\
&=
\sum_{j=1}^k
W_j\1\{g(X_{b_j})\ne h(X_{b_j})\}
\ge-r\,,
\end{align*}
where the last inequality follows from $h\in A_{n,0}$ and
$g\in A_{n,r}$. Thus $A_{n,r}\subseteq\cC_W$. Conversely, for
$g\in\cC_W$,
\begin{align*}
&\sum_{i=1}^n
\left(
\ell_i(h(X_i))-\sigma G_i h(X_i)
-\ell_i(g(X_i))+\sigma G_i g(X_i)
\right)
\\
&=
\sum_{j=1}^k
W_j\1\{g(X_{b_j})\ne h(X_{b_j})\}
\\
&\quad+
\sum_{i\in P}
\bigl(\sigma G_i-\ell_i(1)+\ell_i(0)\bigr)
(1-2h(X_i))
\1\{g(X_i)\ne h(X_i)\}
\\
&\ge-r\,,
\end{align*}
since, for every $i\in P$, we have
\(
\bigl(\sigma G_i-\ell_i(1)+\ell_i(0)\bigr)(1-2h(X_i))
>0\,.
\)
Hence $\cC_W\subseteq A_{n,r}$.
\end{proof}

\begin{lemma}
\label{lem:gaussian-holdout}
Fix $B=\{b_1<\cdots<b_k\}\subseteq[n]$ and $h\in\cH$, and let
$P,\cC_w,W$ be as in \Cref{lem:suboptimal-set-characterization}. Then, for
every $\varepsilon\in(0,1)$,
\[
\Pp\left(
\cD(\DIS(\cC_W))>\varepsilon,
\ X_i\notin\DIS(\cC_W)\ \forall i\in P
\,\middle|\,
P,(X_i)_{i\notin P}
\right)
\le
e^{k/(2\sigma^2)}(1-\varepsilon)^{|P|/2}\,.
\]
\end{lemma}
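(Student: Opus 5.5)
The plan is a Gaussian change of measure on the $k$ coordinates of $W$, combined with Cauchy--Schwarz. Under a reference measure, $W$ is pure symmetric Gaussian noise independent of the held-out contexts $(X_i)_{i\in P}$. Under that measure the surrogate set $\cC_W$ is a function only of $(X_i)_{i\notin P}$ and $G_B=(G_{b_1},\dots,G_{b_k})$. The contexts $(X_i)_{i\in P}$ then act as a fresh i.i.d.\ test sample.

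\textbf{Step 1: conditioning on $P$ is harmless.} Since $h$ is fixed, $P$ is determined by the signs $(1-2h(X_i))G_i$ for $i\notin B$. By symmetry of the $G_i$, given all contexts $X_1,\dots,X_n$, each $i\notin B$ lies in $P$ independently with probability exactly $p$. Hence $P$ is independent of $(X_1,\dots,X_n)$, and conditioning on $P$ does not change the i.i.d.\ $\cD$ law of $(X_i)_{i\in P}$. It also does not change their independence from $(X_i)_{i\notin P}$. Since $P\cap B=\emptyset$, the $G_B$ remain i.i.d.\ $N(0,1)$ and independent of everything else. The remaining dependence is through the losses: the adaptive adversary makes $c_j=\ell_{b_j}(1)-\ell_{b_j}(0)\in[-1,1]$ a function of the contexts, possibly including some $X_i$ with $i\in P$.

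\textbf{Step 2: change of measure.} Given all contexts, write $W_j/\sigma=(1-2h(X_{b_j}))G_{b_j}-\mu_j$ with $\mu_j=c_j(1-2h(X_{b_j}))/\sigma$. Conditionally on $X_1,\dots,X_n$, the vector $W/\sigma$ is therefore distributed as $N(-\mu,I_k)$, where $\|\mu\|^2\le k/\sigma^2$.

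Introduce a reference measure $\mathbb Q$. Under $\mathbb Q$, all contexts have the same law as under $\Pp$ given $P$, but $W/\sigma\sim N(0,I_k)$ independently of all contexts. The likelihood ratio given the contexts is
\[
\rho=\exp\!\left(-\mu^\top W/\sigma-\|\mu\|^2/2\right).
\]
A Gaussian MGF computation gives
\[
\E_{\mathbb Q}[\rho^2\mid X_1,\dots,X_n]=e^{\|\mu\|^2}\le e^{k/\sigma^2}.
\]
Let $E$ be the target event $\{\cD(\DIS(\cC_W))>\varepsilon,\ X_i\notin\DIS(\cC_W)\ \forall i\in P\}$. Cauchy--Schwarz then gives
\[
\Pp(E)=\E_{\mathbb Q}[\rho\1_E]\le e^{k/(2\sigma^2)}\sqrt{\mathbb Q(E)}.
\]
All of this is done conditionally on $P$ and $(X_i)_{i\notin P}$.

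\textbf{Step 3: the reference probability.} Under $\mathbb Q$, the set $\cC_W$ depends only on $(X_i)_{i\notin P}$, which fix the agreement constraints on $[n]\setminus(B\cup P)$ and the values $h(X_{b_j})$, and on $W$. Both are independent of $(X_i)_{i\in P}$, which are i.i.d.\ $\cD$. Conditioning additionally on $W$, consider the event $\cD(\DIS(\cC_W))>\varepsilon$. On it, each $X_i$ with $i\in P$ avoids $\DIS(\cC_W)$ with probability less than $1-\varepsilon$. Hence $\mathbb Q(E)\le(1-\varepsilon)^{|P|}$, and combining with Step 2 yields the claimed bound $e^{k/(2\sigma^2)}(1-\varepsilon)^{|P|/2}$.

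\textbf{Main obstacle.} The delicate point is Steps 1--2: the loss shift $\mu$ depends adaptively on the held-out contexts $(X_i)_{i\in P}$, so the change of measure must be performed jointly over the contexts and $W$ rather than for a fixed shift. The argument above handles this because the bound on $\E_{\mathbb Q}[\rho^2\mid X]$ is uniform over realizations of the contexts. The remaining care is in checking that neither the conditioning on $P$ nor the membership $i\in P$, which involves $h(X_i)$, biases the law of $(X_i)_{i\in P}$.
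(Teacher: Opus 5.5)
Your proposal is correct and is essentially the paper's proof. Both replace the mean-shifted Gaussian $W$ by a centered Gaussian independent of the contexts, apply Cauchy--Schwarz with the Gaussian-MGF bound $e^{k/\sigma^2}$ on the second moment of the likelihood ratio, and bound the reference probability by $(1-\varepsilon)^{|P|}$ because the reference surrogate set is independent of the held-out contexts $(X_i)_{i\in P}$; your Step~1 also spells out the independence of $P$ from the contexts, which the paper leaves implicit. One small correction: the adaptive losses depend on the learner's earlier randomness and not only on the contexts, so the Gaussian law of $W$ should be stated conditionally on $(X_i,\ell_i)_{i=1}^n$, as in the paper, and since your bound on $\E_{\mathbb Q}[\rho^2\mid\cdot\,]$ is uniform, nothing else changes.
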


\begin{proof}
Let
\[
\mu_j
=
(1-2h(X_{b_j}))
\bigl(\ell_{b_j}(1)-\ell_{b_j}(0)\bigr),
\qquad j\in[k],
\]
so that $\|\mu\|^2\le k$. Conditional on
$(X_i,\ell_i)_{i=1}^n$ and $P$, we have
\(
W\sim N(-\mu,\sigma^2I_k)
\).
Let $W_0\sim N(0,\sigma^2I_k)$ be independent of everything else and define
\[
\mathcal E(w)
=
\left\{
\cD(\DIS(\cC_w))>\varepsilon,
\ X_i\notin\DIS(\cC_w)\ \forall i\in P
\right\}.
\]
Then
\begin{align*}
\Pp\left(
\mathcal E(W)
\,\middle|\,
P,(X_i)_{i\notin P}
\right)
&=
\E\left[
\E\left[
\1\{\mathcal E(W)\}
\,\middle|\,
P,(X_i,\ell_i)_{i=1}^n
\right]
\,\middle|\,
P,(X_i)_{i\notin P}
\right]
\\
&=
\E\left[
\int_{\R^k}
\1\{\mathcal E(w)\}
\frac{
\exp\left(-\frac{\|w+\mu\|^2}{2\sigma^2}\right)
}{
\exp\left(-\frac{\|w\|^2}{2\sigma^2}\right)
}
p_{W_0}(w)
\,dw
\,\middle|\,
P,(X_i)_{i\notin P}
\right]
\\
&=
\E\left[
\exp\left(
-\frac{\langle\mu,W_0\rangle}{\sigma^2}
-\frac{\|\mu\|^2}{2\sigma^2}
\right)
\1\{\mathcal E(W_0)\}
\,\middle|\,
P,(X_i)_{i\notin P}
\right]
\\
&\le
\sqrt{
\E\left[
\exp\left(
-\frac{2\langle\mu,W_0\rangle}{\sigma^2}
-\frac{\|\mu\|^2}{\sigma^2}
\right)
\,\middle|\,
P,(X_i)_{i\notin P}
\right]
}\cdot
\sqrt{
\Pp\left(
\mathcal E(W_0)
\,\middle|\,
P,(X_i)_{i\notin P}
\right)
}\,,
\end{align*}
where the last inequality is Cauchy--Schwarz. For the first term,
\begin{align*}
\E\left[
\exp\left(
-\frac{2\langle\mu,W_0\rangle}{\sigma^2}
-\frac{\|\mu\|^2}{\sigma^2}
\right)
\,\middle|\,
P,(X_i)_{i\notin P}
\right]
&=
\E\left[
\exp\left(-\frac{\|\mu\|^2}{\sigma^2}\right)
\E_{W_0}
\exp\left(
-\frac{2\langle\mu,W_0\rangle}{\sigma^2}
\right)
\,\middle|\,
P,(X_i)_{i\notin P}
\right]
\\
\text{(Gaussian MGF)}\qquad
&=
\E\left[
\exp\left(\frac{\|\mu\|^2}{\sigma^2}\right)
\,\middle|\,
P,(X_i)_{i\notin P}
\right]
\\
&\le
e^{k/\sigma^2}\,.
\end{align*}
For the second term, conditional on
$P,(X_i)_{i\notin P},W_0$, the set $\cC_{W_0}$ is fixed, while
$(X_i)_{i\in P}$ are i.i.d.\ from $\cD$. Hence
\begin{align*}
\Pp\left(
\mathcal E(W_0)
\,\middle|\,
P,(X_i)_{i\notin P}
\right)
&=
\E\left[
\1\left\{
\cD(\DIS(\cC_{W_0}))>\varepsilon
\right\}
\left(
1-\cD(\DIS(\cC_{W_0}))
\right)^{|P|}
\,\middle|\,
P,(X_i)_{i\notin P}
\right]
&\le
(1-\varepsilon)^{|P|}\,,
\end{align*}
and substituting these bounds above proves the claim.
\end{proof}

\begin{proof}[Proof of \Cref{lem:transfer}]
For each $B\subseteq[n]$ and $h\in\cH$, let $P,\cC_w,W$ be as in
\Cref{lem:suboptimal-set-characterization}. Then
\begin{align*}
&\Pp\!\left(
\sum_{i=1}^n\1\{X_i\in\DIS(A_{n,r})\}=k,
\ \cD(\DIS(A_{n,r}))>\varepsilon
\right)
\\
\text{(union-bound)}\qquad&\le
\sum_{\substack{B\subseteq[n]\\|B|=k}}
\sum_{h\in\cH}
\Pp\left(
h\in A_{n,0},
\ B=\{i:X_i\in\DIS(A_{n,r})\},
\ \cD(\DIS(A_{n,r}))>\varepsilon
\right)
\\
\text{(\Cref{lem:suboptimal-set-characterization})}\qquad
&\le
\sum_{\substack{B\subseteq[n]\\|B|=k}}
\sum_{h\in\cH}
\Pp\left(
\cD(\DIS(\cC_W))>\varepsilon,
\ X_i\notin\DIS(\cC_W)\ \forall i\in P
\right)
\\
&=
\sum_{\substack{B\subseteq[n]\\|B|=k}}
\sum_{h\in\cH}
\E\left[
\Pp\left(
\cD(\DIS(\cC_W))>\varepsilon,
\ X_i\notin\DIS(\cC_W)\ \forall i\in P
\,\middle|\,
P
\right)
\right]
\\
\text{(\Cref{lem:gaussian-holdout})}\qquad
&\le
\binom nk N e^{k/(2\sigma^2)}
\E\left[(1-\varepsilon)^{|P|/2}\right]
\\
\text{(binomial MGF)}\qquad
&=
\binom nk N e^{k/(2\sigma^2)}
\left(1-p+p\sqrt{1-\varepsilon}\right)^{n-k}\,,
\end{align*}
where $p=\Pp(G>1/\sigma)$ and the last equality uses
$|P|\sim\operatorname{Binomial}(n-k,p)$.
\end{proof}

%% file: discussion.tex
\section{Discussion}
We analyzed a simple FTPL algorithm for online classification with stochastic features and adversarial losses, and gave the first guarantee for optimal expected regret using a single ERM call per round. Beyond resolving a longstanding statistical-computational gap, this opens the path to further research in the broader landscape of oracle-efficient contextual learning. Our techniques are tailored to the classification setting, and extending the result to real-valued hypothesis classes remains open and may require modifications to the algorithm, the oracle model, or both.

\subsection*{Use of AI}
The result was discovered through an interactive process involving AI. The authors used GPT Astra to generate proof ideas and to assist with writing, both in prose and in mathematics. After extensive discussions with GPT Sol about related contextual-learning settings, on September 7, 2026, EH asked GPT Astra about the stochastic-context setting, and GPT Astra produced a proof of a result very similar to the one presented here. In particular, the original four-page AI-generated note already contained the key idea of exploiting the binary structure through disagreement analysis. Its argument, however, used a more complicated counting approach in place of the argument used here to prove \Cref{lem:transfer}, and this approach incurred additional logarithmic factors and did not extend to VC classes.

The arguments presented in the paper were subsequently developed through an interactive process between the authors and the AI, with both contributing substantive ideas, and the final proof differs substantially from the original generated note. Every mathematical argument in the final paper was independently rederived and checked by the human authors, who also wrote nearly all of the final exposition. The authors take full responsibility for the correctness of the results presented here.